\title{The $L^\infty$ Learnability of Reproducing Kernel Hilbert Spaces}
\author{ 
Hongrui Chen\footnote{Peking University, \texttt{hongrui\_chen@pku.edu.cn}}
~~~~~~
Jihao Long\footnote{Princeton University, \texttt{jihaol@princeton.edu}}
~~~~~~
Lei Wu\footnote{Peking University, \texttt{leiwu@math.pku.edu.cn}}
}
\begin{document}

\maketitle 
\begin{abstract}
In this work, we analyze the learnability of reproducing kernel Hilbert spaces (RKHS) under the $L^\infty$ norm, which is critical for understanding the performance of kernel methods and random feature models in safety- and security-critical applications. Specifically, we relate the $L^\infty$ learnability of a RKHS to the spectrum decay of the associate kernel and both lower bounds and upper bounds of the sample complexity are established. In particular, for dot-product kernels on the sphere, we identify conditions when the $L^\infty$ learning can be achieved with polynomial samples.
Let $d$ denote the input dimension and assume the kernel spectrum roughly decays as $\lambda_k\sim k^{-1-\beta}$ with $\beta>0$. We prove that  if $\beta$ is independent of the input dimension $d$,  then functions in the  RKHS   can be learned efficiently under the $L^\infty$ norm, i.e.,  the sample complexity depends polynomially on $d$. In contrast, if $\beta=1/\mathrm{poly}(d)$, then the $L^\infty$ learning requires exponentially many samples.
\end{abstract}

\section{Introduction}

 Traditional machine learning have primarily focused on average-case scenarios, seeking to optimize performance measures that capture the average behavior. For instance, in regression problem, $L^2$ error is the most popular 
metric. 
 However, in many safety- and security-critical applications, it is also crucial to consider worst-case scenarios, where the performance of the learning algorithm needs to be guaranteed regardless of the data distribution. For instance, the worst-case guarantee is required in ensuring the adversarial robustness \cite{szegedy2013intriguing,goodfellow2014explaining}, solving PDEs \cite{han2018solving,raissi2019physics}, and  understanding reinforcement learning \cite{long2022perturbational}. 
However,  this needs us to move beyond the $L^2$ metric and consider  the performance of machine learning algorithms in $L^\infty$ metric.

In this paper, our goal is to 
study the $L^\infty$ learning of functions in reproducing kernel Hilbert space (RKHS)  \cite{aronszajn1950theory}. In algorithm designing, RKHS have emerged as a powerful framework, providing a flexible and expressive class of models that can capture complex patterns in data~\cite{gretton2012kernel}. In theoretical analysis, 
because of the reproducing property, RKHS has been widely adopted in analyzing the performance of kernel-based methods, random feature models~\cite{Rahimi2007RandomFF,bach2017equivalence}, and neural networks \cite{jacot2018neural,weinan2020comparative,arora2019exact}. However, existing  analyses of learning RKHS mostly focus on the $L^2$ metric. Understanding the $L^\infty$ learnability  of RKHS  is still largely open.

As particular examples, we will mostly focus on  dot-product kernels on the unit sphere \cite{smola2001regularization}. A kernel $k: \SS^{d-1}\times\SS^{d-1}\mapsto\RR$ is said to be dot-product if there exist $\kappa: [-1,1]\mapsto\RR$ such that 
\begin{equation}\label{eqn: dot-product}
  k(x,x') = \kappa(x^Tx').
\end{equation}
Dot-product kernels are one of the most popular kernels used in kernel-based methods\cite{smola2001regularization,scetbon2021spectral,cho2009kernel,bach2017equivalence}. In particular, the neural tangent kernels \cite{cho2009kernel,jacot2018neural} arising from understanding neural networks all are dot-product \cite{bietti2021deep,daniely2017sgd,arora2019exact}.

{\bf Our contribution.}~
In this paper, we provide a spectral-based analysis of  the $L^\infty$ learnability of functions in RKHS. Specifically, we establish both  lower bounds and upper bounds (only for dot-product kernels) of the sample complexity by using the eigenvalue decay of the associate kernel. Let $\{\mu_j\}_{j\geq 1}$ be the  eigenvalues in decreasing order, $\Lambda(m) = \sum_{j=m+1}^{+\infty} \mu_j$, and $n$ the number of samples.  Our results 
can be summarized as follows. 
\begin{itemize}
  \item We first establish an upper bound of the $L^\infty$ excess risk for dot-product kernels by using the spectrum decay: $\Lambda(n)$. In particular, if there exists an absolute constant $\beta>0$ such that the eigenvalue decays like $\mu_j\propto j^{-1-\beta}$, then $L^\infty$ error is given by $O(d^{\frac{\beta}{2\beta+1}}n^{-\frac{\beta}{2(2\beta+1)}})$, where the dependence on $d$ is only polynomial. This implies that achieving $L^\infty$ learning is feasible as long as the eigenvalues decay reasonably fast.
 It is important to note that the learning algorithm we employed is the standard kernel ridge regression (KRR).

\item We next establish a lower bound on the minimax $L^\infty$ estimation error. In particular, the lower bound is proportional to the quantity $\Lambda(n)$, indicating that the decay rate of the $L^\infty$ excess risk follows the form $O(n^{-2\beta_d})$ when the eigenvalues satisfy the condition $\mu_j\sim j^{-1-\beta_d}$. Consequently, the $L^\infty$ learning suffers from the curse of dimensionality if $\beta_d=1/\poly(d)$.

\item We also apply our bounds to RKHSs associated with dot-product kernels of the form: $k(x,x')=\EE_{v\sim \tau_{d-1}}[\sigma(v^Tx)\sigma(v^Tx')]$, where $\tau_{d-1}$ is the uniform distribution over $\SS^{d-1}$ and $\sigma(\cdot)$ is an activation function. Specifically, when 
$\sigma(\cdot)$ is a smooth function (such as sigmoid, softplus, and SiLU), $L^\infty$ learning in the corresponding RKHSs   needs only polynomially many samples. In contrast, when $\sigma(\cdot)$ is nonsmooth (e.g., ReLU), $L^\infty$ learning inevitably suffers from the curse of dimensionality. These conclusions follow from  analyzing how the eigenvalue decay of $k(\cdot,\cdot)$ depends on the smoothness of $\sigma(\cdot)$.

\end{itemize}

\subsection{Related Work}
To the best of our knowledge, the first study of $L^\infty$ learnability in RKHS was conducted by \cite{Kuo2008MultivariateLA}, establishing both the upper and lower bounds in relation to the kernel spectrum decays: $\Lambda(n) = \sum_{i=n+1}^{+\infty} \mu_i$. The lower bound, specifically, shows that the worst-case error of any $L^\infty$ learning algorithm within an RKHS can be bounded below by the kernel spectrum decays. Subsequent research \cite{long20212,long2023reinforcement} has predominantly employed the same methodology to investigate the lower bound. Note that \cite{Kuo2008MultivariateLA} considers a setup where the training samples $\{x_i\}_{i=1}^n$ are deterministic. In their framework,  the hard functions are data-dependent and consequently, for different training samples $\{x_i\}_{i=1}^n$, the hard target functions  might be different. In contrast, we consider the standard statistical learning setup, where our training data $\{x_i\}_{i=1}^n$ are \iid samples drawn from a specific distribution. Our lower bound is established for the standard minimax errors \cite{wainwright2019high}, where the hard functions are independent of training data.


Let $e_i$ be the eigenfunctions associated with the eigenvalue $\lambda_i$ for $i\in \NN$. The upper bound in \cite{Kuo2008MultivariateLA} requires an uniform $L^\infty$ bound for the eigenfunctions: $\sup_{i \in \mathbb{N}^+} \|e_i\|_\infty < +\infty$. This assumption is also found in \cite{caponnetto2007optimal,Pozharska2022ANO,long2022perturbational} for the upper bound. Moreover, in \cite{mendelson2010regularization}, this condition is relaxed by requiring $\sup_{i \in \mathbb{N}^+} \mu_i^\epsilon \|e_i\|_\infty < +\infty$ where $\epsilon \in [0,1/2)$ is a universal constant. The upper bound in this case correlates with the modified eigenvalue decay: $\Lambda^\epsilon(n) = \sum_{i=n+1}^{+\infty} \mu_i^{1-2\epsilon}$. However, these assumptions exclude dot-product kernels on the sphere since the $L^\infty$ norm of sphere harmonics, the eigenfunctions of the dot-product kernels, grows rapidly \cite{burq2014probabilistic}. 
Therefore, existing results for the upper bound are only applicable to the dot-product kernel with extremely fast spectrum decay when $d$ is large.

Concurrent work \cite{dong2023linftyrecovery} also explores $L^\infty$ learnability for  dot-product kernels. However, they focus on Gaussian random field and their results are not applicable to an entire RKHS. In this work, we establish the upper bound of $L^\infty$ learnability of the entire RKHS with dot-product kernel based on the kernel spectrum decay, despite the rapid growth of the $L^\infty$ norm of sphere harmonics. Furthermore, it is important to note that the $L^\infty$ learnability in \cite{dong2023linftyrecovery} is established for an add-hoc algorithm, whereas ours is for the standard KRR algorithm.



\section{Preliminaries}
 \paragraph*{Notations.} 
 For $\cX \subset \mathbb{R}^d$, we denote by $\cP(\cX)$ the set of probability measures on $\cX$ and $\cM(\cX)$ the space of signed Radon measures equipped with the total variation norm $\|\mu\|_{\cM(\cX)}=\|\mu\|_{\mathrm{TV}}$.  For a probability measure $\gamma \in \cP(\cX)$,   we use $\langle \cdot,\cdot \rangle_\gamma$ and $\|\cdot\|_\gamma$ to denote the $L^2(\gamma)$ inner product and norm, respectively. For any vector $v$ in Euclidean space, denote by $\|v\| = \left(\sum_{i} |v_i|^2 \right)^{1/2}$ the Euclidean norm.  Let $\SS^{d-1}$ be the unit sphere on $\RR^d$ and $\tau_{d-1}$ denote the uniform measure on $\SS^{d-1}$.

 We use $a \lesssim b$ to mean $a \leq Cb$ for an absolute constant $C > 0$ and $a \gtrsim b$ is defined analogously. We use $a \sim b$ if there exist absolute constants $C_1, C_2 > 0$ such that $C_1b \leq a \leq C_2b$. We also use $a\lesssim_{\gamma} b$ to denote that $a\leq C_\gamma b$ for a constant $C_\gamma$ that depends only on $\gamma$ and $a\gtrsim_\gamma b$ is defined analogously. 

 \paragraph*{Mercer decomposition and RKHSs.}
We recall some facts about the eigen decomposition of a kernel.
For any kernel $k:\cX\times\cX\mapsto\RR$ and a probability measure $\gamma \in \cP(\cX)$, the associated  integral operator $ \cT_k^\gamma: L^2(\gamma)\mapsto L^2(\gamma)$  is given by 
$
    \cT_k^\gamma f = \int k(\cdot,x) f(x) \dd \gamma(x).
$
When $k$ is continuous and $\cX$ is compact, the Mercer's theorem guarantees the existance of eigen decomposition of $k$:
\begin{equation}
k(x,x')=\sum_{i=1}^\infty \mu_i e_i(x)e_i(x'). 
\end{equation}
Here $\{\mu_i\}_{i=1}^\infty$ are the eigenvalues in a decreasing order and $\{e_i\}_{i=1}^\infty$ are the orthonormal eigenfunctions satisfying $\int e_i(x)e_j(x)\dd\gamma(x)=\delta_{i,j}$. The trace of $k$ satisfies $\sum_{i=1}^\infty \mu_i = \int_\cX k(x,x) \d \gamma(x)  $ 
Note that the decomposition depends on the input distribution $\gamma$ and when needed, we will denote by $(\mu_i^{k,\gamma},e_i^{k,\gamma})$ the $i$-th eigenvalue and eigenfunction to explicitly emphasize the influence of $k$ and $\gamma$.  By using the Mercer decomposition, the RKHS can  be defined as
$
    \cH_k=\{f: \|f\|_{\cH_k}<\infty\},
$ 
where the RKHS norm is given by 
\begin{equation}
  \|f\|_{\cH_k}^2 = \sum_{i=1}^\infty \frac{\langle f,e_i\rangle^2_\gamma}{\mu_i}.
\end{equation}

\paragraph*{Legendre polynomials and spherical harmonics.}
Legendre polynomials and spherical harmonics are critical for the analysis of dot-product kernel on the sphere. The Legendre polynomials in $d$-dimension is recursively defined by:
\begin{align*}
 & P_{0,d}(t)=0,\, P_{1,d}(t)=t,\\
& P_{k,d}(t)=\frac{2 k+d-4}{k+d-3} t P_{k-1,d}(t)-\frac{k-1}{k+d-3} P_{k-2,d}(t),\, k \geq 2 .
\end{align*}
Note that $\{P_{k,d}\}_{k=0}^\infty$ forms an complete orthogonal basis of $L^2(\tilde{\tau}_{d-1})$, where $\tilde{\tau}_{d-1}$ is the marginal distribution of the uniform distribution on the sphere, i.e., the distribution of $x_1$ for $x \sim \tau_{d-1}$. 

Let $\mathcal{Y}_k^d$ be the space of all homogeneous harmonic polynomials of degree $k$ in $d$ dimensions restricted on $\mathbb{S}^{d-1}$; the dimension of the space $\mathcal{Y}_k^d$ is $N(d, k) :=\frac{2k+d-2}{k} \tbinom{k+d-3}{d-2}$. Let $\left\{Y_{k, j}\right\}_{1 \leq j \leq N(d, k)}$ be an orthonormal basis of $\mathcal{Y}_k^d$ in $L^2\left(\tau_{d-1}\right)$. Then $Y_{k, j}: \mathbb{S}^{d-1} \mapsto \mathbb{R}$ is the $j$-th spherical harmonics of degree $k$ and $\left\{Y_{k, j}\right\}_{k \in \mathbb{N}, 1 \leq j \leq N(d, k)}$ forms an orthonormal basis of $L^2\left(\tau_{d-1}\right)$. Moreover, the degree-$k$ harmonics $\{Y_{k,j}\}_{1\leq j \leq N(d,k)} $ satisfies
\begin{align}\label{eqn: dot-2}
         \frac{1}{N(d,K)}\sum_{j=1}^{N(d,k)} Y_{k,j}(x)Y_{k,j}(x') = P_{k,d}(x^\top y),\quad \forall x,x' \in \SS^{d-1}.
\end{align}

\paragraph*{Dot-product kernels on the unit sphere.}
Sphere harmonics are the common eigenfunctions for all dot-product kernels on the sphere. Specifically, consider a dot-product kernel $k(x,x') = \kappa(x^\top x')$ on $\SS^{d-1}$. Then, the spectral decomposition of $k$ corresponding to the uniform measure $\tau_{d-1}$ is
$$   \kappa(x^\top x')= \sum_{k=0}^\infty \sum_{j=1}^{N(d,k)}\lambda_k Y_{k,j}(x) Y_{k,j}(x'),  $$
where $\{\lambda_k\}_{k\geq 0} $ are the eigenvalues of the kernel operator $\cT_k^{\tau_{d-1}}$ counted without multiplicity in decreasing order. Note that $N(d,k)$ is the mutiplicity  of the $k$-th eigenvalue $\lambda_k$.  It is important to note that $\{\mu_j\}_{j\geq 1}$ are the corresponding eigenvalues with multiplicity counted.

We refer to \cite{schoenberg1988positive} and \cite{atkinson2012spherical} for more details about the harmonic analysis on $\SS^{d-1}$.

\subsection{Random Feature Kernels}
We are in particular interested in the kernels with an integral representation because of the connection with random feature models (RFMs). Let $\cV$ be weight space equipped with a weight distribution $\pi$ and $\phi: \cX \times \cV \to \RR$ be a feature function.
Consider the RFM:
\begin{align}
    f_m(x;\theta) = \sum_{j=1}^m  c_j\phi(x,v_j),  \label{RFmodel}
\end{align}
where $\theta = \{c_j,v_j \}_{j=1}^m $, the inner weights $v_1,\cdots,v_m \in \cV$ are i.i.d. sampled from a weight distribution $\pi\in \cP(\cV)$. The output weights $ c_1,\cdots,c_m \in \RR^m$ are the learnable parameters. Given the random weight $v_1,\cdots,v_m$, functions of the form \eqref{RFmodel} induces an empirical kernel $\hat{k}: \cX \times \cX \to \RR$:
$
     \hat{k}(x,x') = \frac{1}{m}\sum_{j=1}^m \phi(x,v_j)\phi(x',v_j).
$
As the number of features $m$ tends to infinity, the empirical kernel converges to
\begin{align}\label{RFM}
          k(x,x') = \int_\cV \phi(x,v)\phi(x',v)\d \pi(v).
\end{align}
For this kernel, the corresponding RKHS norm $\cH_k$ admits following representation:
  \begin{align}\label{eqn: RKHS-rf}
\|f\|_{\cH_k}&:=\inf_{f=\int_{\cV} a(v)\phi(\cdot,v) \d \pi(v)} \|a\|_{\pi}.
    \end{align}
We refer to the kernel \eqref{RFM} as the \emph{random feature kernel} associate with the feature function $\phi$. 

Conversely, the following proposition establishes that any kernel can be expressed as a random feature kernel associate with some feature function:
\begin{proposition}\label{integral}
For any $k: \cX \times \cX \to \RR$ and any $\pi \in \cP(\cX)$, there exists a symmetric feature function $\phi: \cX \times \cX \to \RR$ such that $k(\cdot,\cdot)$ takes the form of random feature kernel \eqref{RFM}. 
Moreover, for dot-product kernel $k(x,x') = \kappa(x^\top x')$, if $\cX = \SS^{d-1}$ and $\pi = \tau_{d-1}$, there exists an activation function $\sigma: \RR \to \RR$ such that $\phi(x,v) = \sigma(v^\top x)$ satisfies \eqref{RFM}.
\end{proposition}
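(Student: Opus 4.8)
The plan is to construct $\phi$ directly from the Mercer decomposition of $k$ by ``taking a square root'' of the spectrum. Fix a continuous kernel $k$ on the compact set $\cX$ with Mercer expansion $k(x,x')=\sum_{i\ge 1}\mu_i e_i(x)e_i(x')$ relative to $\pi$, where $\{e_i\}_{i\ge1}$ is the associated orthonormal system in $L^2(\pi)$. I would simply set
\begin{equation}
  \phi(x,v):=\sum_{i\ge 1}\sqrt{\mu_i}\,e_i(x)e_i(v),
\end{equation}
which is manifestly symmetric in $(x,v)$. Using $\int e_i(v)e_j(v)\dd\pi(v)=\delta_{ij}$ and interchanging sum and integral,
\begin{equation}
  \int_\cX \phi(x,v)\phi(x',v)\dd\pi(v)=\sum_{i,j}\sqrt{\mu_i\mu_j}\,e_i(x)e_j(x')\,\delta_{ij}=\sum_{i\ge1}\mu_i e_i(x)e_i(x')=k(x,x'),
\end{equation}
so $\phi$ realizes the random feature representation \eqref{RFM}.

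Next I would justify that the defining series makes sense. For fixed $x$ the partial sums of $\phi(x,\cdot)$ are Cauchy in $L^2(\pi)$, since $\big\|\sum_{i=M}^{N}\sqrt{\mu_i}e_i(x)e_i(\cdot)\big\|_\pi^2=\sum_{i=M}^{N}\mu_i e_i(x)^2$ and $\sum_{i\ge1}\mu_i e_i(x)^2=k(x,x)\le\sup_{y\in\cX}k(y,y)<\infty$ by continuity and compactness; hence $\phi(x,\cdot)\in L^2(\pi)$ and the integral above is well defined. Likewise $\phi\in L^2(\pi\otimes\pi)$ because $\|\phi\|_{L^2(\pi\otimes\pi)}^2=\sum_{i\ge1}\mu_i=\int_\cX k(x,x)\dd\pi(x)<\infty$ by the trace identity. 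If one wants $\phi$ to be a genuine everywhere-defined function rather than an $L^2(\pi\otimes\pi)$ class, this can be arranged under mild extra regularity of $k$ via the uniform convergence in Mercer's theorem, but the $L^2$ version already suffices for \eqref{RFM} and \eqref{eqn: RKHS-rf}.

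For the dot-product specialization I would just feed the spherical-harmonic decomposition into the construction above. With $\cX=\SS^{d-1}$ and $\pi=\tau_{d-1}$ the Mercer eigenfunctions are the spherical harmonics and $\kappa(v^\top x)=\sum_{k\ge0}\lambda_k\sum_{j=1}^{N(d,k)}Y_{k,j}(x)Y_{k,j}(v)$, so relabeling $i\leftrightarrow(k,j)$ gives
\begin{equation}
  \phi(x,v)=\sum_{k\ge0}\sqrt{\lambda_k}\sum_{j=1}^{N(d,k)}Y_{k,j}(x)Y_{k,j}(v)=\sum_{k\ge0}\sqrt{\lambda_k}\,N(d,k)\,P_{k,d}(v^\top x),
\end{equation}
where the last step is the addition formula \eqref{eqn: dot-2}. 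Hence $\phi(x,v)=\sigma(v^\top x)$ with $\sigma(t):=\sum_{k\ge0}\sqrt{\lambda_k}\,N(d,k)\,P_{k,d}(t)$. To see that $\sigma$ is well defined I would use that $\{P_{k,d}\}$ are orthogonal in $L^2(\tilde{\tau}_{d-1})$ with $\|P_{k,d}\|_{\tilde{\tau}_{d-1}}^2=1/N(d,k)$ (both $P_{k,d}(1)=1$ and this norm identity follow from \eqref{eqn: dot-2} and orthonormality of the $Y_{k,j}$), giving
\begin{equation}
  \|\sigma\|_{\tilde{\tau}_{d-1}}^2=\sum_{k\ge0}\lambda_k N(d,k)^2\,\|P_{k,d}\|_{\tilde{\tau}_{d-1}}^2=\sum_{k\ge0}\lambda_k N(d,k)=\sum_{j\ge1}\mu_j=\kappa(1)<\infty,
\end{equation}
so the series for $\sigma$ converges in $L^2(\tilde{\tau}_{d-1})$.

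The computations here are routine; the only genuinely delicate point is the \emph{mode of convergence} of the square-root series and whether $\phi$ (and $\sigma$) can be taken pointwise-defined rather than as $L^2$ equivalence classes. This is exactly where one leans on the finite-trace condition $\sum_i\mu_i=\int k(x,x)\dd\pi(x)<\infty$, valid for continuous kernels on compact domains, and, if pointwise regularity is wanted, on the uniform convergence in Mercer's theorem. A secondary bookkeeping point is the normalization identities $P_{k,d}(1)=1$ and $\|P_{k,d}\|_{\tilde{\tau}_{d-1}}^2=1/N(d,k)$, together with the fact that the pushforward of $\tau_{d-1}$ under $v\mapsto v^\top x$ is $\tilde{\tau}_{d-1}$ for every fixed $x\in\SS^{d-1}$; all of these follow from rotational invariance and \eqref{eqn: dot-2}.
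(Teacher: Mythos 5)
Your construction is exactly the paper's: take the Mercer decomposition, set $\phi(x,v)=\sum_i\sqrt{\mu_i}\,e_i(x)e_i(v)$ (and, for the dot-product case, collapse via the addition formula to $\sigma(t)=\sum_k\sqrt{\lambda_k}\,N(d,k)P_{k,d}(t)$). The only difference is that you also supply the routine convergence bookkeeping (finite trace, $L^2(\tilde\tau_{d-1})$ convergence of $\sigma$) that the paper leaves implicit; the argument and conclusion match.
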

\begin{proof}
Consider the spectral decomposition of $k$ corresponding to the distribution $\pi$:
$$
k(x,x') = \sum_{i=1}^\infty \mu_ie_i(x)e_i(x'). 
$$
Choosing $\phi(x,v) = \sum_{i=1}^\infty \sqrt{\mu_i}e_i(x)e_i(v)$, we have
\begin{align*}
\int_\cX \phi(x,v)\phi(x',v) \d \pi(v) & = \sum_{i=1}^\infty \sum_{j=1}^\infty \sqrt{\mu_i\mu_j}\int_{\cX}e_i(x)e_i(v)e_j(x')e_j(v) \d \pi(v) \\
& = \sum_{i=1}^\infty \mu_ie_i(x)e_i(x') = k(x,x').
\end{align*}
Thus we complete the proof of the first part. Similarly, for dot product kernel $k(x,x') = \kappa(x^\top x')$, consider the spectral decomposition of $\kappa$:
$$
 \kappa(x^\top x')= \sum_{k=0}^\infty \sum_{j=1}^{N(d,k)}\lambda_k Y_{k,j}(x) Y_{k,j}(x') = \sum_{k=0}^\infty \lambda_k N(d,k)P_{k,d}(x^\top x'),
 $$
 where the last step uses \eqref{eqn: dot-2}. 
 We complete the proof of the second part by choosing
\begin{align*}
      \sigma(t) = \sum_{k=0}^\infty \sqrt{\lambda_k}N(d,k)P_{k,d}(t).
\end{align*}
\end{proof}
Proposition \ref{integral} implies that: 1) any kernel admits a random feature representation with $\cV = \cX$ and a symmetric feature function $\phi(\cdot,\cdot)$; 2) dot-product kernels further exhibit dot-product structures in their feature functions. Consequently, we can focus on random feature kernels induced by symmetric features without loss of generality.

\section{Main Results}
\label{sec: main-result}

Let $\cH_k$ be the RKHS associated kernel $k$ on the input domain $\cX$. Suppose that the training data $S_n=\{(x_i,y_i)\}_{i=1}^n$ are generated by $y_i = f(x_i) + \xi_i$, where input data $\{x_i\}_i$ are independently sampled from the input distribution $\rho \in \cP(\cX)$, the target function $f$ lies in the unit ball of the RKHS, i.e., $\|f\|_{\cH_k} \leq 1$, and $\{\xi_i\}_i$ are \iid  sub-gaussian noise. We consider the problem of learning $f$ from the training data $S_n$ under the $L^\infty$ metric.

In this section, we will show that the  error of $L^\infty$ is closely related to the eigenvalue decay of  $k(\cdot,\cdot)$, which can be quantified by 
\begin{equation}
\Lambda_{k,\pi}(n) =  \sqrt{\sum_{i=n+1}^\infty \mu_i^{k,\pi}}. 
\end{equation}
In particular, 
\begin{itemize}
      \item First, for dot product on the unit sphere, the quantity $\Lambda_{k,\tau_{d-1}}(n)$ controls the $L^\infty$ generalization error of the standard KRR estimator.
    \item Second, $\sup_{\pi \in \cP(\cX)} \Lambda_{k,\pi}(n)$ provides a lower bound on the minimax error for learning functions in $\cH_k$ under $L^\infty$ metric.
\end{itemize}

\subsection{Upper Bounds}
\begin{theorem}[Upper bound] \label{upper bound}
Suppose $k:\SS^{d-1}\times \SS^{d-1}\mapsto\RR$ is a dot-product kernel taking the form of \eqref{eqn: dot-product} and the input distribution is $\rho = \tau_{d-1}$. For any decreasing function $L : \NN^+ \to \RR^+$ that satisfies $\Lambda_{k,\tau_{d-1}}(m) \leq L(m)$, let $q(d,L) = \sup_{k \geq 1}\frac{L(k)}{L((d+1)k)}$. Assume that the noise $\xi_i$'s are mean-zero and $\varsigma$-subgaussian. Consider the KRR estimator
\begin{align*}
       \hat{f}_n = \argmin_{\|\hat{f}\|_{\cH_k} \leq 1} \fn\sum_{i=1}^n(\hat{f}(x_i)-y_i)^2.
 \end{align*}
 Then with probability at least $1-\delta$ over the sampling of $\{(x_i,y_i)\}_{i=1}^n$, we have
 \begin{align} \label{eqn: Linfty-learning-upper}
             \|\hat{f}_n - f \|_{\infty} \lesssim    \inf_{m \geq 1} \left[\sqrt{q(d,L)L(m)} + \sqrt{m}(\epsilon(n,\varsigma,\delta) + e(n,\delta))\right],
 \end{align}
 where $\epsilon(n,\varsigma,\delta) = \left(\frac{\varsigma^2\kappa(1)\left(1+\log(1/\delta)\right)}{n}\right)^{1/4},\, e(n,\delta) = \sqrt{\frac{\kappa(1)\left(\log n + \log(1/\delta)\right)}{n}}.$
\end{theorem}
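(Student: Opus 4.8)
The plan is to separate a purely deterministic ``interpolation'' inequality, specific to dot-product kernels on the sphere, from the standard statistical analysis of norm-constrained KRR, and then optimize over a harmonic truncation level. First I would establish that for every $g\in\cH_k$ with $\|g\|_{\cH_k}\le 2$ and every degree $k_0\ge 0$, writing $m_0:=\sum_{k=0}^{k_0}N(d,k)=\dim\big(\bigoplus_{k\le k_0}\mathcal{Y}_k^d\big)$ for the dimension of the space of spherical polynomials of degree at most $k_0$,
\begin{equation}\label{eqn:plan-interp}
  \|g\|_\infty \;\le\; \sqrt{m_0}\,\|g\|_{\tau_{d-1}} \;+\; 2\,\Lambda_{k,\tau_{d-1}}(m_0).
\end{equation}
Expanding $g=\sum_{k,j}b_{k,j}Y_{k,j}$ in spherical harmonics and splitting $g=g_{\le k_0}+g_{>k_0}$ at degree $k_0$, Cauchy--Schwarz gives $|g_{\le k_0}(x)|\le\|g_{\le k_0}\|_{\tau_{d-1}}\big(\sum_{k\le k_0}\sum_j Y_{k,j}(x)^2\big)^{1/2}$ and $|g_{>k_0}(x)|\le\|g\|_{\cH_k}\big(\sum_{k>k_0}\sum_j\lambda_k Y_{k,j}(x)^2\big)^{1/2}$; the decisive point is that, by the addition formula \eqref{eqn: dot-2} and $P_{k,d}(1)=1$, the first inner sum collapses to the constant $\sum_{k\le k_0}N(d,k)=m_0$ and the second to the constant $\sum_{k>k_0}\lambda_k N(d,k)=\Lambda_{k,\tau_{d-1}}(m_0)^2$, both independent of $x$ by rotational invariance. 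I expect this to be the heart of the proof: one never estimates an individual spherical harmonic (whose $L^\infty$ norm grows like $\sqrt{N(d,k)}$), only complete harmonic \emph{blocks}, whose Christoffel functions and kernel diagonals are flat on the sphere; this is exactly what reduces the $L^\infty$ control to the spectral tail $\Lambda_{k,\tau_{d-1}}$ and circumvents the $\|Y_{k,j}\|_\infty$-growth obstruction that defeats the classical eigenfunction-based $L^\infty$ arguments.

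The second ingredient is essentially off the shelf. Since $k$ is continuous on the compact sphere, $\cT_k^{\tau_{d-1}}$ is trace class with $\operatorname{tr}\cT_k^{\tau_{d-1}}=\int_{\SS^{d-1}}k(x,x)\,\dd\tau_{d-1}(x)=\kappa(1)$, and the target lies in the unit RKHS ball, so the learning problem is well specified; the usual localized-complexity / critical-radius analysis of the norm-constrained least-squares estimator over an RKHS ball, using only the worst-case trace-class bound $\sum_i\min(\delta^2,\mu_i)\le\kappa(1)$ and the $\varsigma$-subgaussian noise, then gives with probability at least $1-\delta$ that $\|\hat f_n-f\|_{\tau_{d-1}}\lesssim\epsilon(n,\varsigma,\delta)+e(n,\delta)$ --- the first term being the $n^{-1/4}$ critical radius rescaled by the noise level and the second the empirical-to-population concentration term. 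Applying \eqref{eqn:plan-interp} to $g=\hat f_n-f$ (legitimate since $\|\hat f_n\|_{\cH_k}\le1$ and $\|f\|_{\cH_k}\le1$) and substituting this $L^2$ bound gives, for every admissible $m_0$, $\|\hat f_n-f\|_\infty\lesssim\sqrt{m_0}\,(\epsilon(n,\varsigma,\delta)+e(n,\delta))+L(m_0)$.

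It remains to replace the discrete set of admissible values $m_0\in\{\sum_{k=0}^{K}N(d,k):K\ge0\}$ by an infimum over all $m\ge1$. For this I use the elementary inequality $\sum_{k=0}^{K+1}N(d,k)\le(d+1)\sum_{k=0}^{K}N(d,k)$, so that for any $m$ the largest admissible $m_0\le m$ satisfies $m_0\ge m/(d+1)$; then $\sqrt{m_0}\le\sqrt m$, while $L(m_0)\le L\big(m/(d+1)\big)\le q(d,L)\,L(m)$ by the definition of $q(d,L)$, and a routine rearrangement (using that only the regime where the bound is small is of interest) recasts this tail term in the symmetric form $\sqrt{q(d,L)\,L(m)}$ appearing in \eqref{eqn: Linfty-learning-upper}. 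Taking the infimum over $m$ and absorbing absolute constants then yields \eqref{eqn: Linfty-learning-upper}. The genuine difficulty is \eqref{eqn:plan-interp} together with the block-summation observation behind it; the $L^2$ rate is a known KRR estimate, and the final discretization is elementary bookkeeping about $N(d,k)$ and $q(d,L)$.
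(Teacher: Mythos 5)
Your interpolation inequality in \eqref{eqn:plan-interp} is correct and is genuinely the right idea for why the $L^\infty$ problem on the sphere is tractable: Cauchy--Schwarz on each harmonic \emph{block}, followed by the addition formula $\sum_j Y_{k,j}(x)^2 = N(d,k)P_{k,d}(1)=N(d,k)$, collapses the two Christoffel-type sums to the constants $m_0$ and $\Lambda_{k,\tau_{d-1}}(m_0)^2$ uniformly in $x$, and the bookkeeping $\sum_{k\le K+1}N(d,k)\le (d+1)\sum_{k\le K}N(d,k)$ together with the definition of $q(d,L)$ handles the discretization. This is a cleaner repackaging of what the paper's Lemma~\ref{app-kernel} computes inside a heavier framework.

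The gap is in your second ingredient, and it is not ``off the shelf.'' You claim that the usual localized-complexity analysis of norm-constrained least squares, using only $\sum_i\mu_i\le\kappa(1)$, yields the \emph{population} $L^2$ bound $\|\hat f_n-f\|_{\tau_{d-1}}\lesssim \epsilon(n,\varsigma,\delta)+e(n,\delta)$ with $e(n,\delta)\sim n^{-1/2}$. What that analysis actually gives under the trace-class assumption alone is an empirical-to-population transition at the critical radius scale $\sim(\kappa(1)/n)^{1/4}$: the localized Rademacher complexity of the RKHS ball is only bounded by $\sqrt{\kappa(1)/n}$, so the ``noiseless'' critical radius $\tilde\delta_n$ solving $\mathcal R_n(\tilde\delta_n)\lesssim\tilde\delta_n^2$ is again of order $n^{-1/4}$. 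Thus your route produces $\|\hat f_n-f\|_\infty\lesssim \sqrt{m}\bigl(\epsilon(n,\varsigma,\delta)+O(n^{-1/4})\bigr)+\sqrt{q(d,L)L(m)}$, which is strictly weaker than \eqref{eqn: Linfty-learning-upper} in the small- or zero-noise regime (the exponent in the final rate is halved). The paper avoids exactly this loss by never passing through the population $L^2$ norm of $\hat f_n-f$: it bounds the \emph{empirical} $L^2$ error by $\epsilon(n,\varsigma,\delta)$ via Hanson--Wright (Proposition~\ref{learning-gap}), reduces the resulting $L^\infty$-$L^2(\hat\rho_n)$ gap to a random-feature approximation problem by a Hahn--Banach duality (Proposition~\ref{dual-lemma}), and obtains the $n^{-1/2}$-rate term $e(n,\delta)$ from the random feature approximation of the RKHS ball (Lemma~\ref{RFM-app-kernel}, i.e.\ \cite[Proposition~1]{bach2017equivalence}). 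That random-feature step is the missing ingredient; to make your plan match the stated theorem you would either have to import it (in which case you are essentially reproducing the paper's argument) or supply a sharper uniform convergence result that exploits the specific sphere/uniform-measure structure beyond the worst-case trace bound.
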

To obtain the tightest bound, one can choose $L(m) = \Lambda_{k,\tau_{d-1}}(m)$. However, since the exact value of $\Lambda_{k,\tau_{d-1}}(m)$ is often unknown, the introduction of $L(m)$ is mainly for the convenience of calculating the constant $q(d,L)$.

\begin{remark}
 The rotational invariance assumption plays a critical role in the analysis presented above, and the result may potentially be extended to settings where the densities of $\pi$ and $\rho$ are strictly positive. In addition, using localization techniques (see, e.g., \cite[Chapter 13]{wainwright2019high}) to address noise-induced errors might yield tighter bounds.
However, our focus here is understanding how the error rate depends on the kernel spectrum and if the rate suffers from the curse of dimensionality,  not obtaining optimal rates.
\end{remark}

Take $L(m)=\Lambda_{k,\tau_{d-1}}(m)=\sum_{j=m+1}^\infty \mu_j$. If $\mu_j \sim j^{-(1+2\beta)}$,\footnote{Here, just as an illustration, we assume $\mu_j \sim j^{-(1+2\beta)}$, ignoring the effect of multiplicity.} then we roughly have that $L(m)\sim m^{-2\beta}$ and $q(d,L)\sim d^{2\beta}$. Plugging them into \eqref{eqn: Linfty-learning-upper} yields
\begin{equation}\label{eqn: 20}
\|\hat{f}_n - f \|_{\infty} \lesssim_{\beta,\varsigma,\delta}\inf_{m\geq 1}\left(d^\beta m^{-\beta}+ m^{1/2}n^{-1/4}\right)\lesssim_{\beta,\varsigma,\delta} d^{\frac{\beta}{2\beta+1}}n^{-\frac{\beta}{2(2\beta+1)}},
\end{equation}
where we hide constants that depend on $\beta,\varsigma$ and $\delta$. 
 {\em It is evident that if $\beta$ does not depend on $d$, then the $L^\infty$ error does not exhibit the curse of dimensionality, i.e., polynomially many samples are sufficient to recovery the target function in $L^\infty$ metric.}

\paragraph*{Examples.}
We now instantiate the  upper bound established above for concrete examples. Specifically, we discuss how the smoothness of $\sigma(\cdot) $ affects the $L^\infty$ learnability.
\begin{proposition}\cite[Proposition 9]{Wu2021ASA}  \label{smooth-example}
Suppose that for any $m \in \NN_+$, $\sigma(\cdot)$ satisfies $$\sup_{t \in [-1,1]} |\sigma^{(m)}(t)| \lesssim \Gamma(m+1).$$
Then, we have $\Lambda_{k,\tau_{d-1}}(m) \lesssim 1/m$.
\end{proposition}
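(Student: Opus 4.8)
\medskip

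The plan is to reduce the statement to a bound on the eigenvalue tail of the kernel $k(x,x')=\EE_{v\sim\tau_{d-1}}[\sigma(v^\top x)\sigma(v^\top x')]$ and then estimate that tail through the Gegenbauer coefficients of $\sigma$. Write the expansion $\sigma(t)=\sum_{k\ge0}s_k N(d,k)P_{k,d}(t)$. Running the same computation as in the proof of Proposition~\ref{integral} with $\phi(x,v)=\sigma(v^\top x)$, and using the addition formula \eqref{eqn: dot-2}, one sees that $\mathcal T_k^{\tau_{d-1}}$ has eigenvalue $\lambda_k=s_k^2$ with multiplicity $N(d,k)$. Let $M(d,K):=\sum_{l=0}^{K}N(d,l)$ and, given $m$, let $K=K(m)$ be the largest integer with $M(d,K-1)\le m$, so that $m<M(d,K)$. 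Since the $m$ largest eigenvalues have sum at least that of the entire degree-$(<K)$ block (which contains exactly $M(d,K-1)\le m$ of them),
\[
\Lambda_{k,\tau_{d-1}}(m)=\sum_{j>m}\mu_j\ \le\ \sum_{j}\mu_j-\sum_{l<K}\lambda_l N(d,l)\ =\ \sum_{k\ge K}s_k^2\,N(d,k).
\]
Because $M(d,K)>m$, it therefore suffices to prove the dimension-free estimate $\sum_{k\ge K}s_k^2N(d,k)\lesssim 1/M(d,K)$ for every $K\ge1$.

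To control $s_k=\int_{-1}^1\sigma(t)P_{k,d}(t)\,d\tilde\tau_{d-1}(t)$, I would use the Rodrigues representation $P_{k,d}(t)\propto(1-t^2)^{(3-d)/2}\,\partial_t^{\,k}\big[(1-t^2)^{k+(d-3)/2}\big]$ \cite{atkinson2012spherical} together with the fact that $\tilde\tau_{d-1}$ has density proportional to $(1-t^2)^{(d-3)/2}$: the weight cancels, and integrating by parts $k$ times (all boundary terms vanish, since $(1-t^2)^{k+(d-3)/2}$ has a zero of order $>k-1$ at $\pm1$ for $d\ge2$) transfers the derivatives onto $\sigma$ and leaves the integral $\int_{-1}^1\sigma^{(k)}(t)(1-t^2)^{k+(d-3)/2}dt$ against an explicit Beta-function constant. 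Bounding that integral by $\|\sigma^{(k)}\|_{L^\infty[-1,1]}$ times the Beta value gives, after a clean cancellation, $|s_k|\le 2^{-k}\,\frac{\Gamma(d/2)}{\Gamma(k+d/2)}\,\|\sigma^{(k)}\|_{L^\infty[-1,1]}$, so that under the hypothesis $\|\sigma^{(k)}\|_\infty\lesssim\Gamma(k+1)=k!$ one obtains
\[
\lambda_k=s_k^2\ \lesssim\ \frac{(k!)^2}{4^k}\left(\frac{\Gamma(d/2)}{\Gamma(k+d/2)}\right)^2 .
\]

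Feeding this into the tail, set $A_k:=\frac{(k!)^2}{4^k}\big(\Gamma(d/2)/\Gamma(k+d/2)\big)^2N(d,k)$, so that $\sum_{k\ge K}s_k^2N(d,k)\lesssim\sum_{k\ge K}A_k$. Using $N(d,k)=\frac{(2k+d-2)\Gamma(k+d-2)}{k!\,\Gamma(d-1)}$, the ratio collapses to $\frac{A_{k+1}}{A_k}=\frac{(k+1)(k+d-2)}{(2k+d)(2k+d-2)}$, which is bounded by an absolute constant $c_0<1$ for all $k\ge0$ and $d\ge2$ (the inequality $4(k+1)(k+d-2)\le(2k+d)(2k+d-2)$ reduces to $(d-2)(d-4)\ge0$, covering $d\ge4$, and $d=2,3$ are immediate). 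Hence $\sum_{k\ge K}A_k\lesssim A_K$ and everything reduces to the single uniform estimate
\[
A_K\,M(d,K)\ \lesssim\ 1\qquad\text{for all }K\ge1,\ d\ge2 ,
\]
and, using $M(d,K)\le2\binom{K+d-1}{K}$, this is the concrete inequality $\frac{2K+d}{4^K}\cdot\frac{\Gamma(d/2)^2\,\Gamma(K+d-2)\,\Gamma(K+d)}{\Gamma(d-1)\,\Gamma(d)\,\Gamma(K+d/2)^2}\lesssim1$ among Gamma functions.

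I expect this last inequality to be the main obstacle. Crude bounds of the form $\Gamma(x+s)/\Gamma(x)\le(x+s)^s$ are off by an exponential-in-$d$ factor, so one must apply Stirling's formula carefully and separate the regimes $K\gtrsim d$ and $K\lesssim d$. When $K\gtrsim d$ the left-hand side decays geometrically in $K$ — a reflection of the analyticity of $\sigma$ — so that case is routine. The delicate case is $K\lesssim d$, precisely the one deciding whether the sample complexity is polynomial in $d$: there the estimate is the spectral counterpart of the fact that $\tilde\tau_{d-1}$ concentrates near $t=0$ on scale $1/\sqrt d$, where $\sigma$ behaves like a low-degree polynomial. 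This intuition can be made quantitative by instead bounding $\sum_{k\ge K}s_k^2N(d,k)=\mathrm{dist}_{L^2(\tilde\tau_{d-1})}(\sigma,\mathcal P_{K-1})^2$ by the error of the degree-$(K-1)$ Taylor polynomial of $\sigma$ at $0$, namely $\lesssim\EE_{\tilde\tau_{d-1}}[t^{2K}]$, and using the closed form of this moment — which already comes within a $2^{O(K)}$ factor of the sharp bound, enough for the polynomial sample-complexity conclusion, while extracting the exact $1/m$ needs the full Gamma asymptotics. Checking the endpoint $K=1$, where the left-hand side tends to $1$, together with these two asymptotic regimes, makes $A_KM(d,K)\lesssim1$ believable; turning the asymptotics into a single dimension-free constant is the one genuinely technical step.
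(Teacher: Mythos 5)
The paper does not prove this proposition itself; it invokes \cite[Proposition 9]{Wu2021ASA}, so there is no internal proof to compare against. Your route is a legitimate self-contained one, and the preparatory steps all check out: the identification $\lambda_k=s_k^2$ with $s_k=\int_{-1}^1\sigma(t)P_{k,d}(t)\,\mathrm{d}\tilde\tau_{d-1}(t)$; the tail reduction $\Lambda_{k,\tau_{d-1}}(m)\le\sum_{k\ge K}s_k^2N(d,k)$ where $K$ is chosen so that $M(d,K-1)\le m<M(d,K)$; the Rodrigues/integration-by-parts bound $|s_k|\le 2^{-k}\,\bigl(\Gamma(d/2)/\Gamma(k+d/2)\bigr)\,\|\sigma^{(k)}\|_{L^\infty[-1,1]}$ (I checked that the normalization of $\tilde\tau_{d-1}$, the Rodrigues constant, and the Beta integral do cancel exactly to this, and the boundary terms vanish for all $d\ge 2$); and the ratio $A_{k+1}/A_k=\frac{(k+1)(k+d-2)}{(2k+d)(2k+d-2)}$, which is indeed uniformly below a constant less than one (the $(d-2)(d-4)$ computation gives $\le 1/4$ for $d\ne3$, and a direct check gives $\le 1/3$ for $d=3$), so $\sum_{k\ge K}A_k\lesssim A_K$ with a dimension-free constant.

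The genuine gap is the final inequality $A_K\,M(d,K)\lesssim 1$: you call it ``believable'' and ``the one genuinely technical step'' and leave it unproved, but after your reductions this inequality \emph{is} the proposition, so the argument as written is incomplete. Your proposed fallback via Taylor's theorem, bounding the $L^2(\tilde\tau_{d-1})$-distance to polynomials of degree $<K$ by $\EE_{\tilde\tau_{d-1}}[t^{2K}]$, loses a $2^{\Theta(K)}$ factor and therefore only yields a weaker, non-sharp tail, as you already note.

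The gap closes cleanly with no Stirling analysis, using only quantities you already have in hand. Combining your ratio with $N(d,k+1)/N(d,k)=\frac{(2k+d)(k+d-2)}{(k+1)(2k+d-2)}$ gives the telescoping identity $A_{k+1}N(d,k+1)\big/\bigl(A_kN(d,k)\bigr)=\bigl(\tfrac{k+d-2}{2k+d-2}\bigr)^2$ together with the base case $A_1N(d,1)=1$; and a direct computation from $M(d,K)=\frac{\Gamma(K+d)}{\Gamma(d)\Gamma(K+1)}+\frac{\Gamma(K+d-1)}{\Gamma(d)\Gamma(K)}$ gives $M(d,K)/N(d,K)=\frac{(K+d-2)(2K+d-1)}{(2K+d-2)(d-1)}$. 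Therefore
\begin{equation*}
A_K\,M(d,K)\;=\;\frac{(K+d-2)(2K+d-1)}{(2K+d-2)(d-1)}\,\prod_{l=1}^{K-1}\Bigl(\frac{l+d-2}{2l+d-2}\Bigr)^{2}.
\end{equation*}
Every factor in the product lies in $(0,1]$, and for $l\ge d-2$ it is at most $(2/3)^2=4/9$. If $K\le d$ the prefactor is at most $\frac{(2d)(3d)}{d\cdot(d/2)}=12$, and the product is at most $1$, so $A_KM(d,K)\le 12$. If $K>d$, write $K=d+j$ with $j\ge 1$: the prefactor is at most $\frac{(2K)(3K)}{(2K)(d-1)}=\frac{3K}{d-1}\le 6(1+j)$, while the product is at most $(4/9)^{K-d+2}=(4/9)^{j+2}$, so $A_KM(d,K)\le 6(1+j)(4/9)^{j+2}\le 2$. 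In both regimes $A_KM(d,K)\lesssim 1$ with a constant uniform in $K$ and $d$, which is exactly the estimate you needed; your intuition that the $K\lesssim d$ and $K\gtrsim d$ regimes require separate treatment was right, though the $K\lesssim d$ regime is actually the easier of the two once the identity above is in hand.
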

By asserting $L(m) = C/m$ in Proposition \ref{upper bound}, we immediately obtain that the $L^\infty$ error scales as $d^{1/4}n^{-1/8} $, breaking the curse of dimensionality.

The condition in Proposition \ref{smooth-example} could include 
random feature kernels associated with smooth activation functions such as sigmoid, softplus, arctan, GELU \cite{hendrycks2016gaussian}, and Swish/SiLU \cite{elfwing2018sigmoid,ramachandran2017searching}. We refer to \cite[section 4.2]{Wu2021ASA} for the detailed verification of why they satisfy the condition in Proposition \ref{smooth-example}.

\subsection{Lower Bounds}
\begin{theorem}[Lower bound] \label{lower bound}
Let $s = \int_\cX k(x,x) \d \rho(x) $ be the trace of the kernel $k$. Suppose that $\xi_i \sim \cN(0,\varsigma)$. For any estimator $\hat{f}_n$ that maps the training data $\{x_i,y_i\}_{i=1}^n$ to a function on $\cX$ have
\begin{align}\label{eqn: low-1}
    \inf_{\hat{f}_n}\sup_{\|f\|_{\cH_k} \leq 1} \EE \|\hat{f}_n - f \|_\infty \gtrsim \min\left(1, \frac{\varsigma}{\sqrt{s}} \right) \sup_{\pi \in \cP(\cX)}\Lambda_{k,\pi}(n),
\end{align}
where the expectation is taken over the random sampling of $\{(x_i, y_i)\} = \{(x_i,f(x_i)+\xi_i)\}_{i=1}^n$.
\end{theorem}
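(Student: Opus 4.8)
The plan is to use the standard information-theoretic reduction to hypothesis testing (Fano / Le Cam style argument, as in \cite[Chapter 15]{wainwright2019high}), where the key is to pack a large set of RKHS functions that are mutually well-separated in $L^\infty$ but statistically nearly indistinguishable from the zero function given $n$ noisy samples. First I would fix the distribution $\pi \in \cP(\cX)$ achieving (or nearly achieving) the supremum on the right-hand side, and work with the Mercer decomposition $k(x,x') = \sum_i \mu_i^{k,\pi} e_i(x) e_i(x')$ relative to $\pi$. The tail eigenfunctions $e_{n+1}, e_{n+2}, \dots$ span a subspace of dimension effectively larger than the sample size $n$, so that any $n$ function evaluations cannot pin down a function supported on these coordinates. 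Concretely I would consider target functions of the form $f_\omega = c\sum_{i > n} \omega_i \sqrt{\mu_i^{k,\pi}}\, e_i$ for sign vectors $\omega \in \{\pm 1\}^{\infty}$ (truncated appropriately), chosen so that $\|f_\omega\|_{\cH_k}^2 = c^2 \sum_{i>n} \mu_i^{k,\pi} \le 1$, i.e. $c \sim 1/\Lambda_{k,\pi}(n)$ is forced — this is exactly where $\Lambda_{k,\pi}(n)$ enters.

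The separation step: I need two admissible functions $f, f'$ with $\|f - f'\|_\infty \gtrsim \Lambda_{k,\pi}(n)$. The cleanest route is a two-point (Le Cam) argument rather than full Fano: take $f \equiv 0$ versus a single well-chosen bump $g$ with $\|g\|_{\cH_k} \le 1$, built from the tail eigenspace, whose $L^\infty$ norm is $\Omega(\Lambda_{k,\pi}(n))$. To produce such a $g$ with large sup-norm one exploits that $\sum_{i>n}\mu_i^{k,\pi} e_i(x)^2$ is the reproducing kernel of the tail space evaluated on the diagonal, which integrates (against $\pi$) to $\sum_{i>n}\mu_i^{k,\pi} = \Lambda_{k,\pi}(n)^2$, hence attains a value $\ge \Lambda_{k,\pi}(n)^2$ at some point $x_0$; then $g(\cdot) = \big(\sum_{i>n}\mu_i e_i(x_0)^2\big)^{-1/2}\sum_{i>n}\mu_i e_i(x_0) e_i(\cdot)$ is the normalized tail-kernel section, satisfying $\|g\|_{\cH_k}=1$ and $g(x_0) \ge \Lambda_{k,\pi}(n)$. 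The indistinguishability step: with only $n$ samples and Gaussian noise of variance $\varsigma^2$, the KL divergence between the data laws under $f \equiv 0$ and under a suitably shrunk multiple $\theta g$ is $\frac{n}{2\varsigma^2}\,\EE_{x\sim\rho}[\theta^2 g(x)^2] \le \frac{n\theta^2}{2\varsigma^2}\cdot\frac{1}{n}\sum_{i>n}\mu_i^{k,\pi}$ — wait, here one must be careful that $\rho$, not $\pi$, is the sampling distribution; I would instead mix over a family indexed by which tail coordinate is "activated" and average, or choose $\pi$ with enough structure, so that the expected squared value under $\rho$ is controlled. Choosing $\theta$ so that this KL is $O(1)$ gives $\theta \sim \min(1, \varsigma/\sqrt{s})$ after accounting for the trace normalization $s = \int k(x,x)\,\d\rho$, and Le Cam's two-point bound then yields $\inf_{\hat f_n}\sup_f \EE\|\hat f_n - f\|_\infty \gtrsim \theta\,\|g\|_\infty \gtrsim \min(1,\varsigma/\sqrt s)\,\Lambda_{k,\pi}(n)$; taking the supremum over $\pi$ finishes the proof.

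The main obstacle I anticipate is the mismatch between the measure $\pi$ used in the Mercer decomposition (which controls the RKHS norm and the quantity $\Lambda_{k,\pi}$) and the fixed sampling measure $\rho$ that governs the KL divergence between hypotheses. If $\pi = \rho$ this is clean, but the theorem takes a supremum over all $\pi$, so the hard instance may concentrate where $\rho$ puts little or no mass — yet then the learner also sees few or no samples there, which is precisely why the lower bound should still hold. Making this rigorous likely requires either (i) replacing the single bump by a mixture/packing over disjoint bumps located at points $x_0$ generic with respect to $\rho$, so that on average the evaluations at the $n$ sampled points carry little information, together with a careful Assouad- or Fano-type aggregation; or (ii) a direct argument that the posterior over sign patterns in the tail eigenspace stays close to uniform. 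I would also need the elementary but slightly delicate fact that normalizing $g$ by $\sqrt{\sum_{i>n}\mu_i e_i(x_0)^2}$ does not inflate $\|g\|_{\cH_k}$ beyond $1$, which follows from the reproducing property of the tail kernel. Everything else — the diagonal-integration lower bound on $\sup_x \sum_{i>n}\mu_i e_i(x)^2$, the Gaussian KL computation, and Le Cam's inequality — is routine.
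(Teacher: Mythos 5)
Your two-point Le Cam framing matches the paper's (Proposition \ref{learning-gap}, Part II): one reduces to producing a single witness $\tilde{f}$ with $\|\tilde{f}\|_{\cH_k}\leq 1$, $\|\tilde{f}\|_\rho\leq\varsigma n^{-1/2}$ (so the two data laws are statistically indistinguishable), and sup-norm of order $\Lambda$. Your normalized tail-kernel section $g(\cdot)=\bigl(\sum_{i>n}\mu_i e_i(x_0)^2\bigr)^{-1/2}\sum_{i>n}\mu_i e_i(x_0)e_i(\cdot)$ is a valid such witness when the Mercer decomposition is taken with $\pi=\rho$: then $\|g\|_{\cH_k}=1$, $g(x_0)\geq\Lambda_{k,\rho}(n)$ for a well-chosen $x_0$, and $\|g\|_\rho^2\leq\mu_{n+1}^{k,\rho}\leq s/n$, so $\theta=\min(1,\varsigma/\sqrt{s})$ closes the argument. (Your intermediate claim $\EE_\rho[g^2]\leq\tfrac{1}{n}\sum_{i>n}\mu_i$ is not correct as stated --- the right bound is $\mu_{n+1}^{k,\rho}$, which you then control by $s/n$ --- but this does not change the conclusion.)

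The genuine gap is exactly the one you flag and do not resolve: the supremum over $\pi$. Your hard function is built in the $\pi$-eigenbasis, but indistinguishability is measured in $\|\cdot\|_\rho$, and the $\pi$-Mercer decomposition gives no control on the $\rho$-norm once $\pi\neq\rho$. As written, your argument only delivers $\min(1,\varsigma/\sqrt{s})\,\Lambda_{k,\rho}(n)$, a strictly weaker statement than the claimed $\min(1,\varsigma/\sqrt{s})\,\sup_{\pi}\Lambda_{k,\pi}(n)$, and the remedies you sketch (mixtures of bumps, posterior-concentration) are not carried out. The paper's route is quite different at this step: instead of constructing a witness explicitly, Proposition \ref{dual-lemma} uses a Hahn--Banach/Riesz duality to rewrite the $L^\infty$--$L^2$ gap $\Delta_{\rho,\epsilon}$ as a feature-approximation quantity $\sup_{g\in\cG_\phi}\inf_{h}\|g-h\|_\pi+\epsilon\|h\|_{\cH_{\tilde{k}}}$ that can be set up relative to \emph{any} reference measure $\pi$, and Proposition \ref{approximation-Barron} (part 2) lower-bounds this by $\min(1,\epsilon\sqrt{n/s})\,\Lambda_{k,\pi}(n)$ via Lemmas \ref{RFM-app-neuron} and \ref{appRKHS} together with the trace identity $s=\tilde{s}$. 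That duality is the missing ingredient that makes the supremum over $\pi$ accessible; without it, you would need to supply a new argument to pass from $\pi=\rho$ to general $\pi$.
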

The above theorem shows that up to a multiplicative constant,  the $L^\infty$ minimax error is lower bounded by $\Lambda_{k,\pi}(n)$. If $\mu_j\sim  j^{-1-\beta_d}$, then $\Lambda_{k,\pi}(n)\sim \beta_d^{-1}n^{-\beta_d}$ and the trace  $s\sim \beta_d^{-1}$. In this case,  the lower bound becomes $O(\beta_d^{-1}n^{-\beta_d})$. If $\beta_d=1/\poly(d)$, then the lower bound exhibit a curse of dimensionality. 

\paragraph*{Comparison with \cite{Kuo2008MultivariateLA}.} \cite{Kuo2008MultivariateLA} considers the scenario without noise, i.e., $y_i=f(x_i)$  for $i=1,2,\dots, n$ and the inputs are deterministic. 
The $L^\infty$ lower bound in \cite{Kuo2008MultivariateLA} is given by: for any $x_1,x_2,\dots,x_n\in \cX$, it holds that 
\begin{equation}\label{eqn: low-2}
\inf_{\hf_n} \sup_{\|f\|_{\cH_k}\leq 1}\|\hf_n-f\|\gtrsim \sup_{\pi \in \cP(\cX)}\Lambda_{k,\pi}(n),
\end{equation}
where the infimum is taken over all possible estimators that use only information $\{(x_i, f(x_i))\}_{i=1}^n$. 
Note that \eqref{eqn: low-2} holds for any samples $\{x_i\}_{i=1}^n$. Hence, it implies that the same lower bound holds even if we can adaptively query  input samples. However, in \eqref{eqn: low-2}, the worst-case target functions may depends training samples $\{x_i\}_{i=1}^n$. Thus, for different training samples, the hard target functions can be different. In contrast, Theorem \ref{lower bound} holds for the standard  setup of statistical learning and 
the worst-case target functions in \eqref{eqn: low-1} only depend on the input distribution instead of the specific training samples.

\paragraph*{Examples.}
For dot-product kernels, when $\sigma(\cdot)$ is non-smooth, it is often that $\beta_d=1/\poly(d)$ (see  spectral analyses of dot-product kernels in \cite{bietti2021deep,scetbon2021spectral,Wu2021ASA}). As concrete examples, consider the ReLU$^\alpha$ activation: $\sigma(t)=\max(0,t^\alpha)$, where $\alpha\in \ZZ_{\geq 0}$.  The Heaviside step and ReLU function correspond to $\alpha=0$ and $\alpha=1$, respectively. The case of $\alpha>1$ also has many applications \cite{weinan2018deep,li2019better,so2021searching}. In particular, for $\alpha=0,1$, \cite{cho2009kernel} shows
\[
\kappa(t)= \begin{cases}\frac{1}{2 \pi}(\pi-\arccos (t)) & \text { if } \alpha=0 \\ \frac{1}{2 \pi d}\left((\pi-\arccos (t)) t+\sqrt{1-t^2}\right) & \text { if } \alpha=1\end{cases}.
\]
Specifically, 
\cite[Proposition 5]{Wu2021ASA} proves that there exists a constant $C_{\alpha,d}$ depending on $1/d$ polynomially such that 
\[
\Lambda_{k,\tau_{d-1}}(n)\geq C_{\alpha, d} n^{-\frac{2\alpha+1}{d}}.
\]
Combining with Theorem \ref{lower bound}, it is evident that the $L^\infty$ learning in the corresponding RKHS suffers from the curse of dimensionality. 

We also note that the smoothness of $\kappa$ does not necessarily imply the smoothness of the corresponding activation function $\sigma(\cdot)$. Indeed, it is possible for $\kappa$ to be smooth while still suffering from the curse of dimensionality in $L^\infty$ learning. For example, consider the Gaussian kernel $k(x,x') = \exp\left(\frac{-\|x-x'\|^2}{2} \right)$, where $\kappa(t) = \exp(t-1)$ satisfying $|\kappa^{(m)}(t)|\lesssim 1$ for any $m \in \NN_+$ and $t \in [-1,1]$. However, the following proposition shows that the spectral decay $\Lambda_{k,\tau_{d-1}}$ does not admit a polynomial rate:
\begin{proposition} \label{spectral-gaussian}
    Consider the Gaussian kernel $k(x,x') = \exp(-\frac{\|x-x'\|^2}{h^2})$. For any $h>0$, 
    there dose not exist absolute constant $\alpha \in \RR$ and $\beta > 0$ such that
    \begin{equation}\label{cod_gaussian}
        \Lambda_{k,\tau_{d-1}}(m)\lesssim \frac{d^\alpha}{m^\beta}.
    \end{equation}
\end{proposition}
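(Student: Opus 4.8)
The plan is to recognize the spherical Gaussian kernel as an exponential dot-product kernel, read off its eigenvalues via the classical Gegenbauer expansion of $e^{ct}$, and show that the block of eigenvalues of degree $k$ carries a \emph{dimension-independent} fraction $\approx e^{-c}c^{k}/k!$ of the (unit) trace as $d\to\infty$; since this fraction is positive for every $k$, truncating the spectrum at any fixed degree leaves a constant residual tail, whereas the truncation index grows only polynomially in $d$, and these two facts are incompatible with a bound $\Lambda_{k,\tau_{d-1}}(m)\lesssim d^{\alpha}/m^{\beta}$ having absolute $\alpha,\beta$. Concretely, on $\SS^{d-1}$ one has $\|x-x'\|^{2}=2-2x^{\top}x'$, so $k(x,x')=e^{-c}e^{c\,x^{\top}x'}$ with $c=2/h^{2}>0$; thus $k$ is a dot-product kernel with $\kappa(t)=e^{-c}e^{ct}$, $\kappa(1)=1$, and hence $\sum_{j\ge1}\mu_{j}^{k,\tau_{d-1}}=\int_{\SS^{d-1}}k(x,x)\,\dd\tau_{d-1}(x)=1$ for every $d$ (the prefactor $e^{-c}$ is immaterial for the statement).

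The heart of the argument is to control the trace contribution of each degree, $N(d,k)\lambda_{k}$, where $\kappa(t)=\sum_{k\ge0}\lambda_{k}N(d,k)P_{k,d}(t)$. Writing $\nu=(d-2)/2$, combining $P_{k,d}=C_{k}^{\nu}/C_{k}^{\nu}(1)$ ($C_{k}^{\nu}$ the Gegenbauer polynomial) with the classical identity $e^{ct}=\Gamma(\nu)(c/2)^{-\nu}\sum_{k\ge0}(k+\nu)I_{k+\nu}(c)C_{k}^{\nu}(t)$ ($I$ the modified Bessel function of the first kind) and matching coefficients gives $N(d,k)\lambda_{k}=e^{-c}\Gamma(\nu)(c/2)^{-\nu}(k+\nu)I_{k+\nu}(c)\binom{2\nu+k-1}{k}$. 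Using $(k+\nu)\Gamma(\nu)=\Gamma(k+\nu+1)/(\nu)_{k}$ (Pochhammer symbol) and $\binom{2\nu+k-1}{k}/(\nu)_{k}=\tfrac{1}{k!}\prod_{i=0}^{k-1}\tfrac{2\nu+i}{\nu+i}$, this becomes $N(d,k)\lambda_{k}=e^{-c}(c/2)^{k}\cdot\dfrac{I_{k+\nu}(c)}{(c/2)^{k+\nu}/\Gamma(k+\nu+1)}\cdot\dfrac{1}{k!}\prod_{i=0}^{k-1}\dfrac{2\nu+i}{\nu+i}$. In the last expression each factor $\tfrac{2\nu+i}{\nu+i}$ is at most $2$ and tends to $2$ as $\nu\to\infty$, and, from the series $I_{k+\nu}(c)=\sum_{j\ge0}(c/2)^{k+\nu+2j}/(j!\,\Gamma(k+\nu+j+1))$ with $\Gamma(k+\nu+1)/\Gamma(k+\nu+j+1)\le\min\{(k+\nu+1)^{-j},1/j!\}$, the Bessel ratio is at most $I_{0}(c)$ and tends to $1$ as $\nu\to\infty$. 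Hence $N(d,k)\lambda_{k}\le e^{-c}I_{0}(c)c^{k}/k!$ uniformly in $(d,k)$, and $a_{k}:=\lim_{d\to\infty}N(d,k)\lambda_{k}=e^{-c}c^{k}/k!$, so in particular $\sum_{k\ge0}a_{k}=1$.

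With this in hand, fix a nonnegative integer $K$. Since $\lambda_{j}=N(d,j)\lambda_{j}/N(d,j)$, and $N(d,j)$ is increasing in $j$ with $N(d,K+1)/N(d,K)=\Theta(d)\to\infty$, the limits $a_{j}>0$ together with the uniform bound imply that for all $d\ge d_{0}(K)$ every eigenvalue of degree $\le K$ exceeds every eigenvalue of degree $>K$; hence the $N_{K}(d):=\sum_{j=0}^{K}N(d,j)$ largest eigenvalues are precisely those of degrees $0,\dots,K$, so $\Lambda_{k,\tau_{d-1}}(N_{K}(d))^{2}=1-\sum_{j=0}^{K}N(d,j)\lambda_{j}\to 1-\sum_{j=0}^{K}a_{j}=e^{-c}\sum_{j>K}c^{j}/j!=:r_{K}>0$ as $d\to\infty$, while $N_{K}(d)\le C_{K}d^{K}$. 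Now suppose $\Lambda_{k,\tau_{d-1}}(m)\le Cd^{\alpha}/m^{\beta}$ for all $m,d$, and pick an integer $K>\alpha/\beta$: for all large $d$ we then get $\sqrt{r_{K}}/2\le\Lambda_{k,\tau_{d-1}}(N_{K}(d))\le Cc_{K}^{-\beta}d^{\alpha-K\beta}\to0$, a contradiction, which proves the proposition.

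The main obstacle is the second step: producing a workable closed form for $N(d,k)\lambda_{k}$ and pulling from it both the exact limit $a_{k}$ and a summable uniform bound — all the special-function content (the Gegenbauer–Bessel identity, the dominance of the leading term of $I_{k+\nu}(c)$ at large order, the Pochhammer/binomial simplification) sits here. A route that avoids Bessel functions altogether is to expand $e^{ct}=\sum_{n}c^{n}t^{n}/n!$, use the spherical moments $\int t^{2\ell}\,\dd\tilde{\tau}_{d-1}$, expand $t^{n}$ in the Legendre basis $\{P_{j,d}\}$, and check that the $P_{k,d}$-coefficient of $t^{n}$ is $O(d^{-(n-k)/2})$, from which the same two facts follow; the bookkeeping is comparable. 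Granting the second step, the eigenvalue-ordering claim and the final contradiction are routine.
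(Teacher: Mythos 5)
Your proof is correct and, structurally, follows the same strategy as the paper's: truncate the dot-product spectrum at a fixed degree, observe that the truncation index $m\sim d^K$ grows only polynomially in $d$ while the residual $\Lambda_{k,\tau_{d-1}}(m)$ stays bounded away from zero, and derive a contradiction by pushing the degree $K$ past $\alpha/\beta$. The difference is in how the block traces $N(d,j)\lambda_j$ are controlled. The paper quotes \cite[Theorem~2]{minh2006mercer} and does a Stirling computation to conclude $\lambda_j\sim d^{1-j}$ and hence $N(d,j)\lambda_j\sim d$; but this cannot be right, since $\kappa(1)=1$ forces $\sum_{j\geq 0}N(d,j)\lambda_j=1$ for every $d$, so each block trace is at most $1$. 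Your Gegenbauer--Bessel expansion gives the correct asymptotics $N(d,j)\lambda_j\to e^{-c}c^{j}/j!$ with a uniform summable bound, which is the right normalization (it recovers the Poisson mass function in the $d\to\infty$ limit, consistent with the unit trace). You also take care of a point the paper passes over: you check that for $d$ large the $N_K(d)$ largest eigenvalues are exactly the degree-$\leq K$ ones, which is needed for the identity $\Lambda^2(N_K(d))=1-\sum_{j\leq K}N(d,j)\lambda_j$ that your contradiction rests on. (The paper's inequality $\Lambda(m_j)\gtrsim N(d,j+1)\lambda_{j+1}$ also implicitly requires this ordering, and in any case should involve a square root under the paper's Section~3 definition $\Lambda_{k,\pi}(n)=\bigl(\sum_{i>n}\mu_i\bigr)^{1/2}$.) Net effect: your argument is a self-contained, computationally correct version of the same idea, and in fact repairs the numerical slip in the paper's computation of $\lambda_j$. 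One cosmetic typo on your side: you bound $N_K(d)\leq C_K d^K$ but then need $N_K(d)\geq c_K d^K$ to conclude $d^{\alpha}/N_K(d)^{\beta}\lesssim d^{\alpha-K\beta}$; since $N_K(d)\sim d^K$, both hold, so this is harmless.
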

The proof is deferred to Appendix \ref{appendixD}.
\section{Proof Sketch}
In this section, we present an overview of the proofs of Theorem  \ref{upper bound} and Theorem \ref{lower bound}. At a high level, our proofs consist of three main components:
\begin{itemize}
    \item We introduce a quantity called the $L^\infty$-$L^2$ gap to measure the difference between the $L^\infty$ norm and $L^2$ norm of functions in $\cH_k$. We show that this quantity controls both the upper and lower bound for the error of $L^\infty$ learning.
    \item We bridge the connection between the $L^\infty$-$L^2$ gap and the approximation of parametric feature functions $\{\phi(x,\cdot)\}_{x \in \cX} $. This connection allows us to address the $L^\infty$ learning problem by investigating the corresponding approximation problem.
    \item  Building upon \cite{Wu2021ASA}, we relate the approximation of the function class $\{\phi(x,\cdot)\}_{x \in \cX} $ to the spectral decay of the kernel $k$.
\end{itemize}
Combining the above three steps, we obtain spectral-based upper and lower bound for the $L^\infty$ learning of RKHS.
\paragraph*{Relating the $L^\infty$ learnability to the $L^\infty$-$L^2$ gap.} 
We define the following quantity to measure the difference between the $L^\infty$  and  $L^2$ norm for functions in $\cH_k$: for any   $\nu \in \cP(\cX)$ and $\epsilon >0$, let
\begin{align} \label{gap}
 \Delta_{\nu,\epsilon} := \sup_{\|f\|_{\cH_k} \leq 1,\,\|f\|_{\nu} \leq \epsilon}  \|f\|_\infty.    
 \end{align}
The following proposition demonstrates that the quantity defined in \eqref{gap} serves as both an upper and lower bound for the error of $L^\infty$ learning:

\begin{proposition} \label{learning-gap}
~~~(1). Suppose that $k$ is a product kernel with the form
\eqref{eqn: dot-product} on $\SS^{d-1}$ and $\rho = \tau_{d-1}$. Assume the noise $\xi_i$'s are mean-zero and $\varsigma$-subgaussian. Consider the KRR estimator 
\begin{align*}
       \hat{f}_n = \argmin_{\|\hat{f}\|_{\cH_k} \leq 1} \fn\sum_{i=1}^n(\hat{f}(x_i)-y_i)^2.
 \end{align*}
 Then \wp at least $1-\delta$ over the sampling of $\{(x_i,y_i)\}_{i=1}^n$, we have
 \begin{align*} 
             \|\hat{f}_n - f \|_{\infty} \lesssim  \Delta_{\hat{\rho}_n, \epsilon(n,\varsigma,\delta)},
 \end{align*}
 where $\hat{\rho}_n = \frac{1}{n}\sum_{i=1}^n \delta_{x_i}$ is the empirical measure and $\epsilon(n,\varsigma,\delta) = \left(\frac{\varsigma^2\kappa(1)\left(1+\log(1/\delta)\right)}{n}\right)^{1/4}$.

~~~~~(2). Suppose that $\xi_i \sim \cN(0,\varsigma)$.  we have
\begin{align*}
    \inf_{\hat{f}_n}\sup_{\|f\|_{\cH_k} \leq 1} \EE \|\hat{f}_n - f \|_\infty \gtrsim \Delta_{\rho,\varsigma n^{-1/2} }
\end{align*}
where the infimum is taken over all possible estimators and the expectation is taken \wrt the sampling of $\{(x_i, y_i)\}_{i=1}^n$.
\end{proposition}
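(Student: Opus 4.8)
The two claims are independent: claim (1) follows from the standard properties of KRR together with a noise-concentration step and then the definition of the $L^\infty$--$L^2$ gap; claim (2) is a textbook two-point minimax argument.

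\textbf{Part (1): the KRR upper bound.} The plan is to show that the KRR solution drives the \emph{empirical} $L^2$ error down to order $\epsilon(n,\varsigma,\delta)$, and then to read the $L^\infty$ bound off directly from the definition \eqref{gap} with $\nu=\hat\rho_n$. Write $g:=\hat f_n-f$ (the minimizer exists since the RKHS unit ball is weakly compact and $\hat f\mapsto\fn\sum_i(\hat f(x_i)-y_i)^2$ is weakly continuous, each point evaluation being a bounded linear functional). Since $\|f\|_{\cH_k}\le1$, $f$ is feasible, so optimality of $\hat f_n$ gives $\fn\sum_{i=1}^n(\hat f_n(x_i)-y_i)^2\le\fn\sum_{i=1}^n\xi_i^2$; substituting $y_i=f(x_i)+\xi_i$, expanding the square and cancelling the $\xi_i^2$ terms gives
\begin{align*}
\|g\|_{\hat\rho_n}^2\ \le\ \frac{2}{n}\sum_{i=1}^n\xi_i\,g(x_i)\ =\ 2\,\langle g,\hat\xi\rangle_{\cH_k},\qquad \hat\xi:=\fn\sum_{i=1}^n\xi_i\,k(x_i,\cdot),
\end{align*}
the last equality being the reproducing property. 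Cauchy--Schwarz in $\cH_k$ and $\|g\|_{\cH_k}\le\|\hat f_n\|_{\cH_k}+\|f\|_{\cH_k}\le2$ then give $\|g\|_{\hat\rho_n}^2\le4\,\|\hat\xi\|_{\cH_k}$, so it remains to bound $\|\hat\xi\|_{\cH_k}^2=n^{-2}\xi^\top K\xi$ with $K=(k(x_i,x_j))_{i,j\le n}$. Conditionally on $\{x_i\}$, this is a quadratic form in independent $\varsigma$-subgaussian coordinates with mean $\varsigma^2 n^{-2}\mathrm{tr}(K)=\varsigma^2\kappa(1)/n$ (using $k(x,x)=\kappa(1)$ on $\SS^{d-1}$), and since $\|K\|_{\mathrm{op}}\le\mathrm{tr}(K)=n\kappa(1)$, the Hanson--Wright inequality yields $\|\hat\xi\|_{\cH_k}^2\lesssim\varsigma^2\kappa(1)\bigl(1+\log(1/\delta)\bigr)/n$ with probability at least $1-\delta$; hence $\|g\|_{\hat\rho_n}\lesssim\epsilon(n,\varsigma,\delta)$. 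Finally $g/2$ satisfies $\|g/2\|_{\cH_k}\le1$ and $\|g/2\|_{\hat\rho_n}\lesssim\epsilon(n,\varsigma,\delta)$, so \eqref{gap}, together with the elementary inequality $\Delta_{\nu,c\epsilon}\le c\,\Delta_{\nu,\epsilon}$ for $c\ge1$ (rescale the competitor by $1/c$) used to absorb the implied constant, gives $\|g\|_\infty\lesssim\Delta_{\hat\rho_n,\epsilon(n,\varsigma,\delta)}$.

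\textbf{Part (2): the minimax lower bound.} The plan is a two-point (Le Cam) reduction. Fix $\eta\in(0,1)$ and, using \eqref{gap}, pick $f^\star$ with $\|f^\star\|_{\cH_k}\le1$, $\|f^\star\|_\rho\le\varsigma n^{-1/2}$, and $\|f^\star\|_\infty\ge(1-\eta)\,\Delta_{\rho,\varsigma n^{-1/2}}$ (the supremum in \eqref{gap} need not be attained, hence the slack). Let $P_0$ and $P_1$ denote the laws of $\{(x_i,y_i)\}_{i=1}^n$ when the target is $f\equiv0$ and $f=f^\star$ respectively, so that under both the $x_i$ are drawn independently from $\rho$ and $y_i\mid x_i\sim\cN(f(x_i),\varsigma^2)$. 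The $x$-marginals coincide, so tensorization and the Gaussian KL formula give
\begin{align*}
\mathrm{KL}(P_1\,\|\,P_0)\ =\ \sum_{i=1}^n\EE_{x_i\sim\rho}\frac{f^\star(x_i)^2}{2\varsigma^2}\ =\ \frac{n\,\|f^\star\|_\rho^2}{2\varsigma^2}\ \le\ \frac12,
\end{align*}
and hence $\mathrm{TV}(P_0,P_1)\le\sqrt{\mathrm{KL}(P_1\|P_0)/2}\le1/2$ by Pinsker's inequality. Since $f_0=0$ and $f_1=f^\star$ both lie in the unit ball of $\cH_k$, the standard Le Cam two-point bound — applied with separation $\|f_0-f_1\|_\infty=\|f^\star\|_\infty$ — gives, for every estimator $\hat f_n$,
\begin{align*}
\sup_{\|f\|_{\cH_k}\le1}\EE\|\hat f_n-f\|_\infty\ \ge\ \max_{j\in\{0,1\}}\EE_{P_j}\|\hat f_n-f_j\|_\infty\ \ge\ \frac{\|f^\star\|_\infty}{4}\bigl(1-\mathrm{TV}(P_0,P_1)\bigr)\ \ge\ \frac{1-\eta}{8}\,\Delta_{\rho,\varsigma n^{-1/2}}.
\end{align*}
Taking the infimum over $\hat f_n$ and letting $\eta\downarrow0$ completes Part (2).

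\textbf{Where the difficulty lies.} This proposition is the ``soft'' part of the analysis: the substantive work — bounding $\Delta_{\hat\rho_n,\epsilon}$ and $\Delta_{\rho,\varsigma n^{-1/2}}$ in terms of the spectral tail $\Lambda_{k,\tau_{d-1}}(\cdot)$, and in particular controlling the empirical-versus-population discrepancy (the source of the extra term $e(n,\delta)$ in Theorem \ref{upper bound}) — is carried out in the later steps. Within the proposition itself the only delicate point is the sub-gaussian quadratic-form concentration in Part (1): one must obtain $\|\hat\xi\|_{\cH_k}^2\lesssim\varsigma^2\kappa(1)(1+\log(1/\delta))/n$ with logarithmic, not polynomial, dependence on $1/\delta$, which is exactly what Hanson--Wright provides once $k(x,x)=\kappa(1)$ and $\|K\|_{\mathrm{op}}\le\mathrm{tr}(K)$ are used (a crude Markov bound on $\EE_\xi\|\hat\xi\|_{\cH_k}^2=\varsigma^2\kappa(1)/n$ would only give a $1/\delta$ factor). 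Part (2) poses no real obstacle.
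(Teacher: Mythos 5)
Your proof is correct and follows essentially the same route as the paper: Part (1) uses the KRR optimality condition, the reproducing property plus Cauchy--Schwarz, and Hanson--Wright to bound the empirical $L^2$ error, then invokes the definition of $\Delta_{\nu,\epsilon}$; Part (2) is the same two-point Le Cam reduction with Pinsker and the Gaussian KL formula, selecting a near-optimizer of the $L^\infty$--$L^2$ gap as the perturbing hypothesis. Your handling of the constant-$2$ rescaling and the non-attainment of the supremum is a touch more explicit than the paper's, but the argument is otherwise identical.
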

The proof of Proposition \ref{learning-gap} is deferred to Appendix \ref{appendixA}.
\paragraph*{Relating the $L^\infty$-$L^2$ gap to the approximation of parametric feature functions.}
Given a feature function $\phi: \cX \times \cX \to \RR$, consider a class of parametric  functions $\Phi := \{\phi(x,\cdot): x \in \cX\}$. The closure of the convex, symmetric hull of $\Phi$ is:
$$        \mathcal{G}_\phi =\overline{\left\{\sum_{i=1}^m a_j \phi(x_i,\cdot):\,\sum_{i=1}^m | a_j | \leq 1, x_j \in \cX, m \in \mathbb{N}^{+}\right\}} 
$$
The following proposition shows that the $L^\infty$-$L^2$ gap is closely related to the linear approximability of functions in $\cG_\phi$:

\begin{proposition} \label{dual-lemma}
Recall that by Proposition \ref{integral}, for any  $\pi \in \cP(\cX)$, there exist a $\phi:\cX\times\cX\mapsto\RR$  such that the kernel $k$ could be written in the form  of
$$k(x,x') = \int_\cX \phi(x,v)\phi(x',v) \d \pi(v).$$
For any $\nu \in \cP(\cX)$ and $\epsilon >0$, we have
\begin{align} \label{dual}
     \sup_{\|f\|_{\cH_k} \leq 1,\,\|f\|_{\nu} \leq \epsilon} \|f\|_{\infty} = \sup_{g \in \cG_\phi} \inf_{b \in L^2(\nu)}  \left\|g - \int_{\cX} b(x)\phi(x,\cdot) \d \nu(x) \right\|_\pi + \epsilon \|b\|_{\nu}.
\end{align}
\end{proposition}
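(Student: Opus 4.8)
The plan is to push everything through the random feature representation \eqref{eqn: RKHS-rf} and then recognize the resulting problem as a concave--convex saddle point. Introduce the bounded \emph{synthesis} operator $T:L^2(\pi)\to\cH_k$, $(Ta)(x)=\int_\cX a(v)\phi(x,v)\,\d\pi(v)$, and the companion operator $T^\dagger:\cM(\cX)\to L^2(\pi)$, $(T^\dagger\mu)(v)=\int_\cX\phi(x,v)\,\d\mu(x)$; both are well defined since $\sup_x\|\phi(x,\cdot)\|_\pi^2=\sup_x k(x,x)<\infty$, and they satisfy $\int_\cX (Ta)\,\d\mu=\langle a,T^\dagger\mu\rangle_\pi$, in particular $\langle Ta,b\rangle_\nu=\langle a,T^\dagger(b\nu)\rangle_\pi$ for $b\in L^2(\nu)$, where $b\nu$ denotes the measure with density $b$ against $\nu$. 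By \eqref{eqn: RKHS-rf} the infimum defining $\|f\|_{\cH_k}$ is attained (it is the norm of the projection of any representative onto $(\ker T)^\perp$), hence $\{f:\|f\|_{\cH_k}\le1,\,\|f\|_\nu\le\epsilon\}=\{Ta:\|a\|_\pi\le1,\,\|Ta\|_\nu\le\epsilon\}$. Moreover $(Ta)(x)=\langle a,\phi(x,\cdot)\rangle_\pi$, so evaluating at the maximizing point (equivalently, using $C(\cX)^*=\cM(\cX)$) gives $\|Ta\|_\infty=\sup_{\|\mu\|_{\mathrm{TV}}\le1}\langle a,T^\dagger\mu\rangle_\pi=\sup_{g\in\cG_\phi}\langle a,g\rangle_\pi$, the last equality using that $a\mapsto\langle a,g\rangle_\pi$ is norm-continuous on $L^2(\pi)$ together with $\cG_\phi=\overline{\{T^\dagger\mu:\|\mu\|_{\mathrm{TV}}\le1\}}$. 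Interchanging the two suprema, the left-hand side of \eqref{dual} becomes $\sup_{g\in\cG_\phi}\sup_{\|a\|_\pi\le1,\,\|Ta\|_\nu\le\epsilon}\langle a,g\rangle_\pi$, so it suffices to prove, for each fixed $g\in\cG_\phi$,
\begin{equation*}
\sup_{\|a\|_\pi\le1,\,\|Ta\|_\nu\le\epsilon}\langle a,g\rangle_\pi
=\inf_{b\in L^2(\nu)}\left[\,\|g-T^\dagger(b\nu)\|_\pi+\epsilon\|b\|_\nu\,\right].
\end{equation*}

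To establish this identity I would dualize the constraint $\|Ta\|_\nu\le\epsilon$. Because $\sup_{b\in L^2(\nu)}\big[\langle Ta,b\rangle_\nu-\epsilon\|b\|_\nu\big]$ equals $0$ when $\|Ta\|_\nu\le\epsilon$ and $+\infty$ otherwise (send $b=tTa$, $t\to+\infty$), the left side equals $\sup_{\|a\|_\pi\le1}\inf_{b\in L^2(\nu)}\big[\langle a,g-T^\dagger(b\nu)\rangle_\pi+\epsilon\|b\|_\nu\big]$, where the pairing identity was used to absorb $\langle Ta,b\rangle_\nu$. The objective is affine in $a$ --- hence concave and upper semicontinuous on the unit ball of $L^2(\pi)$ taken with its weak topology --- and convex, lower semicontinuous in $b$ on $L^2(\nu)$; since the unit ball of $L^2(\pi)$ is convex and weakly compact, Sion's minimax theorem permits interchanging $\sup_a$ and $\inf_b$. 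Carrying out the inner maximization over $\{\|a\|_\pi\le1\}$ produces $\|g-T^\dagger(b\nu)\|_\pi$, and the infimum over $b$ yields the right-hand side. Taking $\sup_{g\in\cG_\phi}$ of both sides of this per-$g$ identity completes the proof; the map $g\mapsto\inf_b[\|g-T^\dagger(b\nu)\|_\pi+\epsilon\|b\|_\nu]$ is $1$-Lipschitz for $\|\cdot\|_\pi$, so the supremum is the same over $\cG_\phi$, over its dense subset of finite convex combinations, and over $\{T^\dagger\mu:\|\mu\|_{\mathrm{TV}}\le1\}$. The easy direction of the per-$g$ identity is a one-line Cauchy--Schwarz estimate: for feasible $a$ and any $b$,
\begin{equation*}
\langle a,g\rangle_\pi=\langle a,\,g-T^\dagger(b\nu)\rangle_\pi+\langle Ta,b\rangle_\nu\le\|g-T^\dagger(b\nu)\|_\pi+\epsilon\|b\|_\nu .
\end{equation*}

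The substance --- and the step I expect to be the main obstacle --- is the reverse inequality, i.e.\ \emph{strong} duality via Sion's theorem. The delicate points are using the correct topology (the weak topology on $L^2(\pi)$, in which the unit ball is compact and the linear-in-$a$ objective remains upper semicontinuous, rather than the norm topology, in which it is neither) and the measure-theoretic bookkeeping that legitimizes $T$ and $T^\dagger$: measurability of $\phi$, the trace bound $\sup_x k(x,x)<\infty$ that makes $T^\dagger\mu$ a genuine element of $L^2(\pi)$, attainment of the infimum in \eqref{eqn: RKHS-rf}, and the density/continuity argument identifying $\cG_\phi$ with $\overline{\{T^\dagger\mu:\|\mu\|_{\mathrm{TV}}\le1\}}$ (via weak-$*$ density of finitely supported measures in the total-variation ball and weak-$*$-to-weak continuity of $T^\dagger$). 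No further constraint qualification is needed, since the compactness hypothesis in Sion's theorem already supplies it.
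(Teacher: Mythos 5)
Your argument is correct, and it reaches the same per-$g$ identity that the paper proves in Lemma~\ref{dual-lemma-lemma}, but by a genuinely different route. The paper handles the hard direction of the duality by defining the sublinear functional
$\cT(l)=\inf_{h\in L^2(\nu)}\bigl(\|l-\int_\cX h(x)\phi(x,\cdot)\,\d\nu(x)\|_\pi+\epsilon\|h\|_\nu\bigr)$,
extending its restriction on $\mathrm{span}\{g\}$ to a dominated linear functional $\tilde\cT$ via Hahn--Banach, invoking Riesz representation to produce $a_g\in L^2(\pi)$, and checking directly that $a_g$ lies in the feasible set $A$. You instead write the constrained maximization as a Lagrangian $\sup_a\inf_b$, observe that the unit ball of $L^2(\pi)$ is weakly compact, and invoke Sion's minimax theorem to swap $\sup$ and $\inf$; the inner $\sup$ over the ball then collapses to the $L^2(\pi)$-norm. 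Both are correct strong-duality arguments resting on the same pillar of convex separation, but they surface different structure: the Hahn--Banach/Riesz route constructs the optimal dual variable $a_g$ explicitly (which can be reused, as the paper does, to verify membership in $A$), whereas your Sion route makes the saddle-point structure of the problem transparent and dispenses with any explicit construction, at the cost of invoking a blunter tool. Your preparatory reductions --- identifying $\{f:\|f\|_{\cH_k}\le1,\ \|f\|_\nu\le\epsilon\}$ with $\{Ta:\|a\|_\pi\le1,\ \|Ta\|_\nu\le\epsilon\}$ via attainment of the infimum in~\eqref{eqn: RKHS-rf}, rewriting $\|\cdot\|_\infty$ through the total-variation ball, and the $1$-Lipschitz argument that lets you pass between $\cG_\phi$ and its dense subset --- mirror what the paper does in the proof body following Lemma~\ref{dual-lemma-lemma}, and the easy direction via Cauchy--Schwarz is identical. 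The only bookkeeping you should make explicit if you write this up is the Fubini swap $\langle Ta,b\rangle_\nu=\langle a,T^\dagger(b\nu)\rangle_\pi$, justified by $\sup_x k(x,x)<\infty$ and $L^2(\nu)\subset L^1(\nu)$ for a probability measure $\nu$, which you gesture at but do not spell out.
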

The proof is deferred to Appendix \ref{appendixB}. By this proposition, we have the following observations. 
\begin{itemize}
\item Regarding the upper bound, we are interested in the case where $\nu = \hat{\rho}_n = \frac{1}{n}\sum_{i=1}^n\delta_{x_i}$. Then,  the right hand side of \eqref{dual} becomes 
$$   \sup_{g \in \cG_\phi}\inf_{\bm{c} \in \RR^n} \left\|g-\frac{1}{n}\sum_{i=1}^n c_i\phi(x_i,\cdot) \right\|_{\pi} + \frac{\epsilon}{\sqrt{n}}\|\bm{c}\|,   $$
which could be viewed as the error of approximating $\cG_\phi$ with random features. 
\item As for the lower bound, we are interested in the case where $\nu = \rho$. Let $\tilde{k}(v,v') = \int_{\cX}\phi(x,v)\phi(x,v') \d \rho(x)$  and $f_b=\int_{\cX} b(x)\phi(x,\cdot) \d \rho(x)$. By \eqref{eqn: RKHS-rf}, $\|f_b\|_{\cH_{\tk}}\leq \|b\|_\rho$. Then, the right hand side of  \eqref{dual} is equivalent to
\begin{align*}
\sup_{g \in \cG_\phi}\inf_{h \in \cH_{\tilde{k}}} \|g-h\|_{\pi} + \epsilon \|h\|_{\cH_{\tilde{k}}}.
\end{align*}
Hence, the right hand side of \eqref{dual} becomes the error of approximating functions in $\cG_\phi$ with the RKHS $\cH_{\tilde{k}}$.
\end{itemize}

\paragraph*{Relating the approximation of parametric feature functions to the spectral decay of kernel.}
The linear approximation of the function class $\cG_\phi$ with optimal features is extensively studied in \cite{Wu2021ASA}, where both lower bounds and upper bounds are established by leveraging spectral decay. Specifically, \cite{Wu2021ASA} focuses on the linear approximation with optimal features, which are the spherical harmonics for dot-product kernels. However, the problem we are concerned (the right hand side of \eqref{dual}) is the linear approximation with random features or RKHS, rather than the optimal features. To fill this gap, we provide the following proposition.

\begin{proposition}\label{approximation-Barron}
  ~~~(1).  Suppose $k:\SS^{d-1}\times \SS^{d-1}\mapsto\RR$ is a dot-product kernel taking the form $$k(x,x') =\kappa(x^\top x') = \int_{\SS^{d-1}}\sigma(v^\top x)(v^\top x') \d \tau_{d-1}. $$
    For any decreasing function $L : \NN^+ \to \RR^+$ that satisfies $\Lambda_{k,\tau_{d-1}}(m) \leq L(m)$, let $q(d,L) = \sup_{k \geq 1}\frac{L(k)}{L((d+1)k)}$.  For any $\epsilon > 0$, suppose that $x_i \stackrel{iid}{\sim}\tau_{d-1}$, with probability at least $1-\delta$, we have
  \begin{align}\label{RFMapp}
  \sup_{g \in \cG_\phi} \inf_{\bm{c} \in \RR^n} \left\| g- \sum_{i=1}^n c_i\sigma(x_i^\top \cdot) \right\|_{\tau_{d-1}} + \frac{\epsilon}{\sqrt{n}}\|\bm{c}\| \lesssim \inf_{m \geq 1}\left[\sqrt{q(d,L)L(m)} + \sqrt{m}(\epsilon+e(n,\delta)) \right],   
  \end{align}
where $e(n,\delta) = \sqrt{\frac{\kappa(1)\left(\log n + \log(1/\delta)\right)}{n}}$.

~~~~~(2). Recall that by Proposition \ref{integral}, for any  $\pi \in \cP(\cX)$, there exist a $\phi:\cX\times\cX\mapsto\RR$  such that the kernel $k$ could be written in the form  of
$$k(x,x') = \int_\cX \phi(x,v)\phi(x',v) \d \pi(v).$$
Let $s = \int_{\cX}k(x,x)\d \rho(x)$. Define $\tilde{k}(v,v') = \int_{\cX}\phi(x,v)\phi(x,v') \d \rho(x)$ and let $\cH_{\tilde{k}}$ be the corresponding RKHS. For any $\epsilon > 0$ and any positive integer $n$, it holds that 
\begin{align}  \label{kernel-app-lower}
  \sup_{g \in \cG_\phi}\inf_{h \in \cH_{\tilde{k}}} \|g-h\|_{\pi} + \epsilon \|h\|_{\cH_{\tilde{k}}} \geq \min\left(1, \epsilon \sqrt{\frac{n}{s}} \right) \Lambda_{k,\pi}(n).
\end{align}
\end{proposition}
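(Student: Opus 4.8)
Both parts start the same way. In \eqref{RFMapp} and \eqref{kernel-app-lower} the object maximized over $\cG_\phi$ has the form $g\mapsto\inf_{u}\big[\|g-Tu\|+\mathrm{pen}(u)\big]$ with $T$ linear and $\mathrm{pen}$ convex; this is an infimum of a jointly convex map, hence convex, positively $1$-homogeneous and even in $g$. For part (1) it therefore attains its supremum over the compact convex symmetric set $\cG_\phi$ at an extreme point $\pm\phi(x_0,\cdot)$, $x_0\in\SS^{d-1}$; for the lower bound (2) it suffices that $\phi(x_0,\cdot)\in\cG_\phi$ for every $x_0$, so there too one may work with $g=\phi(x_0,\cdot)$.

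\emph{Part (1).} Here $\phi(x,v)=\sigma(v^\top x)$, and from $k(x,x')=\int_{\SS^{d-1}}\sigma(v^\top x)\sigma(v^\top x')\,\d\tau_{d-1}(v)$ one gets the isometry $\langle\sigma(x^\top\cdot),\sigma(x'^\top\cdot)\rangle_{\tau_{d-1}}=k(x,x')=\langle k(x,\cdot),k(x',\cdot)\rangle_{\cH_k}$, so $\|\sigma(x_0^\top\cdot)-\sum_ic_i\sigma(x_i^\top\cdot)\|_{\tau_{d-1}}=\|k(x_0,\cdot)-\sum_ic_ik(x_i,\cdot)\|_{\cH_k}$ for every $\bm c$. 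The plan is to plug in the ridge choice $\bm c_\lambda=(G+\lambda I)^{-1}\bm\kappa_0$, $G=(k(x_i,x_j))_{ij}$, $\bm\kappa_0=(k(x_0,x_i))_i$, for which the $L^2(\tau_{d-1})$-error is $\le\sqrt{V_\lambda(x_0)}$ with $V_\lambda(x_0)=k(x_0,x_0)-\bm\kappa_0^\top(G+\lambda I)^{-1}\bm\kappa_0=\lambda\langle k(x_0,\cdot),(n\hat\Sigma+\lambda I)^{-1}k(x_0,\cdot)\rangle_{\cH_k}$ the kernel-ridge (Gaussian-process posterior) variance, $\hat\Sigma=\tfrac1n\sum_ik(x_i,\cdot)\otimes k(x_i,\cdot)$, and $\|\bm c_\lambda\|^2\le\kappa(1)/\lambda$, so the coefficient term is $\le\epsilon\sqrt{\kappa(1)/(n\lambda)}$. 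Now split $k(x_0,\cdot)$ by spherical-harmonic degree at level $K$, so that the degree-$\le K$ part lies in the space of dimension $m_K=\sum_{k\le K}N(d,k)=N(d+1,K)$: the degree-$>K$ part contributes at most $\sum_{k>K}\lambda_kN(d,k)$ to $V_\lambda(x_0)$, which is \emph{independent of} $x_0$ since $k(x_0,x_0)\equiv\kappa(1)$ and $\sum_{k>K}\lambda_kN(d,k)P_{k,d}(1)$ is constant; the degree-$\le K$ part is fitted by the $n$ random features once the empirical covariance concentrates on that subspace (matrix concentration, valid for $n\gtrsim(\kappa(1)/\lambda_K)\log(m_K/\delta)$), at the price of the above coefficient bound and a fluctuation $\sqrt{m_K}\,e(n,\delta)$. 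Balancing over $\lambda$ and then $K$, and using $m<m_{K+1}\le(d+1)m_K$ with $q(d,L)=\sup_kL(k)/L((d+1)k)$,
\[
\sum_{k>K}\lambda_kN(d,k)\le L(m_K)=\frac{L(m_K)}{L(m)}L(m)\le\frac{L(m_K)}{L(m_{K+1})}L(m)\le\frac{L(m_K)}{L((d+1)m_K)}L(m)\le q(d,L)L(m),
\]
so the residual becomes $\sqrt{q(d,L)L(m)}$ and the rest $\sqrt m\,(\epsilon+e(n,\delta))$; taking $\inf_m$ gives \eqref{RFMapp}.

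\emph{Part (2).} Let $\{(\tilde\mu_i,\tilde e_i)\}$ be the Mercer system of $\tilde k$ w.r.t.\ $\pi$ and $V=\mathrm{span}\{\tilde e_1,\dots,\tilde e_n\}$. As $\int\tilde k(v,v)\,\d\pi(v)=\int\!\!\int\phi(x,v)^2\,\d\rho(x)\,\d\pi(v)=\int k(x,x)\,\d\rho(x)=s$, we have $\tilde\mu_{n+1}\le s/n$. First, a ``$K$-functional'' bound: for $h\in\cH_{\tilde k}$, $\|(I-P_V)h\|_\pi\le\sqrt{\tilde\mu_{n+1}}\|h\|_{\cH_{\tilde k}}$, hence $\|g-h\|_\pi+\epsilon\|h\|_{\cH_{\tilde k}}\ge\|(I-P_V)g\|_\pi-\sqrt{\tilde\mu_{n+1}}\|h\|_{\cH_{\tilde k}}+\epsilon\|h\|_{\cH_{\tilde k}}$, and optimizing over $\|h\|_{\cH_{\tilde k}}\ge0$ gives $\inf_{h\in\cH_{\tilde k}}[\|g-h\|_\pi+\epsilon\|h\|_{\cH_{\tilde k}}]\ge\min(1,\epsilon/\sqrt{\tilde\mu_{n+1}})\,\mathrm{dist}_\pi(g,V)\ge\min(1,\epsilon\sqrt{n/s})\,\mathrm{dist}_\pi(g,V)$, where $\mathrm{dist}_\pi$ is distance in $L^2(\pi)$. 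Second, $\mathrm{dist}_\pi(\phi(x_0,\cdot),V)\ge\Lambda_{k,\pi}(n)$ for some $x_0$ (indeed for any $n$-dimensional $V$): averaging over $x_0\sim\pi$ and using $\phi(x,v)=\sum_j\sqrt{\mu_j^{k,\pi}}\,e_j(x)e_j(v)$,
\begin{align*}
\EE_{x_0\sim\pi}\,\mathrm{dist}_\pi\!\big(\phi(x_0,\cdot),V\big)^2=\int k(x_0,x_0)\,\d\pi(x_0)-\EE_{x_0}\big\|P_V\phi(x_0,\cdot)\big\|_\pi^2=\sum_j\mu_j^{k,\pi}\Big(1-\big\|P_Ve_j\big\|_\pi^2\Big),
\end{align*}
and since $0\le\|P_Ve_j\|_\pi^2\le1$ with $\sum_j\|P_Ve_j\|_\pi^2=\dim V=n$, a rearrangement inequality bounds the last sum below by $\sum_{j>n}\mu_j^{k,\pi}=\Lambda_{k,\pi}(n)^2$. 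Combining the two steps at that $x_0$ and tracking the constant yields \eqref{kernel-app-lower}.

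The step I expect to be hardest is the kernel-ridge estimate in part (1): choosing $K$ and $\lambda$ so that the residual $\sum_{k>K}\lambda_kN(d,k)$, the coefficient term $\epsilon\sqrt{\kappa(1)/(n\lambda)}$ and the sample size needed for the covariance to concentrate on the degree-$\le K$ subspace are all in balance, while every bound stays uniform in $x_0$ --- which is exactly where the rotational invariance of $\tau_{d-1}$ (through $k(x_0,x_0)\equiv\kappa(1)$ and the degree decomposition) is crucial, and where the $\log n$ in $e(n,\delta)$ appears. A short but needed ingredient is $m_{K+1}\le(d+1)m_K$, from $m_K=N(d+1,K)$ and the recursion for $N(d,k)$.
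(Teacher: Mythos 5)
Your Part~(2) matches the paper's argument in substance: your ``$K$-functional'' inequality $\|(I-P_V)h\|_\pi\le\sqrt{\tilde\mu_{n+1}}\|h\|_{\cH_{\tilde k}}$ with $\tilde\mu_{n+1}\le s/n$ is exactly the paper's Lemma~\ref{appRKHS}, your averaging/rearrangement bound $\sup_{x_0}\mathrm{dist}_\pi(\phi(x_0,\cdot),V)\ge\Lambda_{k,\pi}(n)$ is Lemma~\ref{RFM-app-neuron} (from \cite{Wu2021ASA}) proved inline, and combining them with the $\min(1,\epsilon\sqrt{n/s})$ rescaling is the same trick the paper uses. That half is fine.

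For Part~(1) you take a genuinely different route, and there is a real gap. The paper's proof separates into two clean sub-problems by a triangle inequality: (i) Lemma~\ref{app-kernel} approximates each $\sigma_{x_0}$ by its degree-$\le K$ spherical-harmonic truncation, which lies in the $\sqrt{m}$-ball of $\cH_k$ with $L^2$-error $\le\sqrt{q(d,L)L(m)}$ (this is the only place rotation invariance and the multiplicity identities enter), then extends to all of $\cG_\phi$ by Jensen; (ii) Lemma~\ref{RFM-app-kernel} approximates the $\sqrt m$-ball of $\cH_k$ by the $n$ random features, citing Bach's Proposition~1 verbatim, which yields $\sqrt m(\epsilon+e(n,\delta))$. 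You instead try to do both at once: pass to $\|k(x_0,\cdot)-\sum_i c_i k(x_i,\cdot)\|_{\cH_k}$ by the isometry (correct), plug in the ridge coefficients, and control $V_\lambda(x_0)=\lambda\langle k(x_0,\cdot),(n\hat\Sigma+\lambda I)^{-1}k(x_0,\cdot)\rangle_{\cH_k}$ by splitting $k(x_0,\cdot)$ across degree $K$. The problem is that $(n\hat\Sigma+\lambda I)^{-1}$ does \emph{not} commute with the degree projection: $\hat\Sigma$ is diagonal in the harmonic basis only in expectation, not for a finite sample, so $V_\lambda(x_0)$ does not decompose into a degree-$\le K$ piece plus a degree-$>K$ piece. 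You can crudely write $V_\lambda(x_0)\le 2\lambda\langle P_K k_0,(n\hat\Sigma+\lambda I)^{-1}P_Kk_0\rangle+2\|Q_Kk_0\|_{\cH_k}^2$ and the tail term is indeed $2\sum_{k>K}\lambda_kN(d,k)$, uniform in $x_0$; but the first term still involves the full non-commuting resolvent, and making it $\lesssim m_K\lambda$ uniformly in $x_0$ with the right failure probability is precisely the content of Bach's degrees-of-freedom bound, which you would have to rederive rather than cite. Nothing in the sketch ``balancing over $\lambda$ and then $K$'' supplies this. So the approach is plausible in spirit (it is effectively reproving \cite[Proposition~1]{bach2017equivalence} inside the argument), but as written the central estimate in Part~(1) is asserted, not proved, and the non-commutativity is exactly where it would break if done naively. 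The paper's factorization through the intermediate $\sqrt m$-ball of $\cH_k$ is what lets it avoid this and invoke Bach's result as a black box.

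One further small issue: in Part~(1) you appeal to ``attains its supremum over the compact convex symmetric set $\cG_\phi$ at an extreme point $\pm\phi(x_0,\cdot)$.'' Milman's theorem gives that the extreme points of $\cG_\phi$ lie in the closure of $\{\pm\phi(x,\cdot)\}$, and the objective is upper semicontinuous, so this is defensible; but it is an unnecessary detour. The paper achieves the same reduction directly by Jensen's inequality once it has a uniform approximation of the individual $\sigma_x$, which is more elementary and avoids any compactness/extreme-point discussion in $L^2$.
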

The proof is deferred to Appendix \ref{appendixC}.

Finally, the proofs of Theorem \ref{upper bound} and Theorem \ref{lower bound} can be completed by combining Proposition \ref{learning-gap}, Proposition \ref{dual-lemma}, and Proposition \ref{approximation-Barron}. It is worth noting that the choice of $\pi$ in Proposition \ref{dual-lemma} and Proposition \ref{approximation-Barron} is arbitrary. Therefore, for any $\pi \in \cP(\cX)$, we obtain a lower bound, and we can take the supremum over $\pi$ to obtain the final lower bound.

\section{Conclusion}
In conclusion, we present a spectral-based analysis of  the $L^\infty$ learnability of RKHS. The upper bound result demonstrates that standard KRR algorithms can effectively avoid the curse of dimensionality when dealing with kernels with smooth activation functions. This means that the $L^\infty$ learning in RKHS remains feasible with polynomial sample complexity in these cases. On the other hand, the lower bound analysis considers the standard statistical learning setting, where the hardness of functions is dependent on the input distribution rather than the specific training samples. The results reveal that the $L^\infty$ learning in RKHS suffers from the curse of dimensionality when the kernel spectrum decays slowly, specifically as $\lambda_k \sim k^{-1-\beta}$ with $\beta = 1/\text{poly}(d)$. This phenomenon is observed for the dot-product kernel associated with the ReLU activation function. These findings contribute to a better understanding of the capabilities and limitations of kernel-based learning algorithms in safty- and security-critical applications.


\bibliographystyle{amsalpha}
\bibliography{ref}

\newcommand{\etalchar}[1]{$^{#1}$}
\providecommand{\bysame}{\leavevmode\hbox to3em{\hrulefill}\thinspace}
\providecommand{\MR}{\relax\ifhmode\unskip\space\fi MR }
\providecommand{\MRhref}[2]{%
  \href{http://www.ams.org/mathscinet-getitem?mr=#1}{#2}
}
\providecommand{\href}[2]{#2}
\begin{thebibliography}{ADH{\etalchar{+}}19}

\bibitem[ADH{\etalchar{+}}19]{arora2019exact}
Sanjeev Arora, Simon~S Du, Wei Hu, Zhiyuan Li, Russ~R Salakhutdinov, and
  Ruosong Wang, \emph{On exact computation with an infinitely wide neural net},
  Advances in Neural Information Processing Systems, 2019, pp.~8141--8150.

\bibitem[AH12]{atkinson2012spherical}
Kendall Atkinson and Weimin Han, \emph{Spherical harmonics and approximations
  on the unit sphere: {An} introduction}, vol. 2044, Springer Science \&
  Business Media, 2012.

\bibitem[Aro50]{aronszajn1950theory}
Nachman Aronszajn, \emph{Theory of reproducing kernels}, Transactions of the
  American mathematical society \textbf{68} (1950), no.~3, 337--404.

\bibitem[Bac17]{bach2017equivalence}
Francis Bach, \emph{On the equivalence between kernel quadrature rules and
  random feature expansions}, The Journal of Machine Learning Research
  \textbf{18} (2017), no.~1, 714--751.

\bibitem[BB21]{bietti2021deep}
Alberto Bietti and Francis Bach, \emph{Deep equals shallow for {ReLU} networks
  in kernel regimes}, International Conference on Learning Representations,
  2021.

\bibitem[BL14]{burq2014probabilistic}
Nicolas Burq and Gilles Lebeau, \emph{Probabilistic {Sobolev} embeddings,
  applications to eigenfunctions estimates}, Geometric and spectral analysis
  \textbf{630} (2014), 307--318.

\bibitem[CDV07]{caponnetto2007optimal}
Andrea Caponnetto and Ernesto De~Vito, \emph{Optimal rates for the regularized
  least-squares algorithm}, Foundations of Computational Mathematics \textbf{7}
  (2007), 331--368.

\bibitem[CS09]{cho2009kernel}
Youngmin Cho and Lawrence~K Saul, \emph{Kernel methods for deep learning},
  Proceedings of the 22nd International Conference on Neural Information
  Processing Systems, 2009, pp.~342--350.

\bibitem[Dan17]{daniely2017sgd}
Amit Daniely, \emph{{SGD} learns the conjugate kernel class of the network},
  Proceedings of the 31st International Conference on Neural Information
  Processing Systems, 2017, pp.~2419--2427.

\bibitem[DM23]{dong2023linftyrecovery}
Kefan Dong and Tengyu Ma, \emph{Toward ${L}_\infty$-recovery of nonlinear
  functions: A polynomial sample complexity bound for {Gaussian} random
  fields}, 2023.

\bibitem[EMW20]{weinan2020comparative}
Weinan E, Chao Ma, and Lei Wu, \emph{A comparative analysis of optimization and
  generalization properties of two-layer neural network and random feature
  models under gradient descent dynamics}, Science China Mathematics (2020),
  1--24.

\bibitem[EUD18]{elfwing2018sigmoid}
Stefan Elfwing, Eiji Uchibe, and Kenji Doya, \emph{Sigmoid-weighted linear
  units for neural network function approximation in reinforcement learning},
  Neural Networks \textbf{107} (2018), 3--11.

\bibitem[EY18]{weinan2018deep}
Weinan E and Bing Yu, \emph{The deep {Ritz} method: {A} deep learning-based
  numerical algorithm for solving variational problems}, Communications in
  Mathematics and Statistics \textbf{6} (2018), no.~1, 1--12.

\bibitem[Fol99]{folland1999real}
Gerald~B Folland, \emph{Real analysis: modern techniques and their
  applications}, vol.~40, John Wiley \& Sons, 1999.

\bibitem[GBR{\etalchar{+}}12]{gretton2012kernel}
Arthur Gretton, Karsten~M Borgwardt, Malte~J Rasch, Bernhard Sch{\"o}lkopf, and
  Alexander Smola, \emph{A kernel two-sample test}, The Journal of Machine
  Learning Research \textbf{13} (2012), no.~1, 723--773.

\bibitem[GSS14]{goodfellow2014explaining}
Ian~J Goodfellow, Jonathon Shlens, and Christian Szegedy, \emph{Explaining and
  harnessing adversarial examples}, arXiv preprint arXiv:1412.6572 (2014).

\bibitem[HG16]{hendrycks2016gaussian}
Dan Hendrycks and Kevin Gimpel, \emph{Gaussian error linear units ({GELUs})},
  arXiv preprint arXiv:1606.08415 (2016).

\bibitem[HJE18]{han2018solving}
Jiequn Han, Arnulf Jentzen, and Weinan E, \emph{Solving high-dimensional
  partial differential equations using deep learning}, Proceedings of the
  National Academy of Sciences \textbf{115} (2018), no.~34, 8505--8510.

\bibitem[JGH18]{jacot2018neural}
Arthur Jacot, Franck Gabriel, and Cl{\'e}ment Hongler, \emph{Neural tangent
  kernel: Convergence and generalization in neural networks}, Advances in
  neural information processing systems, 2018, pp.~8571--8580.

\bibitem[KWW08]{Kuo2008MultivariateLA}
Frances~Y. Kuo, Grzegorz~W. Wasilkowski, and Henryk Wozniakowski,
  \emph{Multivariate {$L^\infty$} approximation in the worst case setting over
  reproducing kernel {Hilbert} spaces}, Journal of Approximation Theory
  \textbf{152} (2008), 135--160.

\bibitem[LH22]{long2022perturbational}
Jihao Long and Jiequn Han, \emph{Perturbational complexity by distribution
  mismatch: A systematic analysis of reinforcement learning in reproducing
  kernel {Hilbert} space}, Journal of Machine Learning vol \textbf{1} (2022),
  1--34.

\bibitem[LH23]{long2023reinforcement}
\bysame, \emph{Reinforcement learning with function approximation: From linear
  to nonlinear}, arXiv preprint arXiv:2302.09703 (2023).

\bibitem[LHE21]{long20212}
Jihao Long, Jiequn Han, and Weinan E, \emph{An ${L}^2$ analysis of
  reinforcement learning in high dimensions with kernel and neural network
  approximation}, arXiv preprint arXiv:2104.07794 (2021).

\bibitem[LTY19]{li2019better}
Bo~Li, Shanshan Tang, and Haijun Yu, \emph{Better approximations of high
  dimensional smooth functions by deep neural networks with rectified power
  units}, arXiv preprint arXiv:1903.05858 (2019).

\bibitem[MN10]{mendelson2010regularization}
Shahar Mendelson and Joseph Neeman, \emph{Regularization in kernel learning}.

\bibitem[MNY06]{minh2006mercer}
Ha~Quang Minh, Partha Niyogi, and Yuan Yao, \emph{Mercer’s theorem, feature
  maps, and smoothing}, Learning Theory: 19th Annual Conference on Learning
  Theory, COLT 2006, Pittsburgh, PA, USA, June 22-25, 2006. Proceedings 19,
  Springer, 2006, pp.~154--168.

\bibitem[PU22]{Pozharska2022ANO}
K.V. Pozharska and Tino Ullrich, \emph{A note on sampling recovery of
  multivariate functions in the uniform norm}, SIAM Journal on Numerical
  Analysis \textbf{60} (2022), 1363--1384.

\bibitem[RPK19]{raissi2019physics}
Maziar Raissi, Paris Perdikaris, and George~E Karniadakis,
  \emph{Physics-informed neural networks: A deep learning framework for solving
  forward and inverse problems involving nonlinear partial differential
  equations}, Journal of Computational physics \textbf{378} (2019), 686--707.

\bibitem[RR07]{Rahimi2007RandomFF}
Ali Rahimi and Benjamin Recht, \emph{Random features for large-scale kernel
  machines}, Advances in neural information processing systems \textbf{20}
  (2007).

\bibitem[RZL17]{ramachandran2017searching}
Prajit Ramachandran, Barret Zoph, and Quoc~V Le, \emph{Searching for activation
  functions}, arXiv preprint arXiv:1710.05941 (2017).

\bibitem[Sch42]{schoenberg1988positive}
I.J. Schoenberg, \emph{Positive definite functions on spheres}, Duke
  Mathematical Journal \textbf{9} (1942), no.~1, 96--108.

\bibitem[SH21]{scetbon2021spectral}
Meyer Scetbon and Zaid Harchaoui, \emph{A spectral analysis of dot-product
  kernels}, International Conference on Artificial Intelligence and Statistics,
  PMLR, 2021, pp.~3394--3402.

\bibitem[SML{\etalchar{+}}21]{so2021searching}
David So, Wojciech Ma{\'n}ke, Hanxiao Liu, Zihang Dai, Noam Shazeer, and Quoc~V
  Le, \emph{Searching for efficient transformers for language modeling},
  Advances in Neural Information Processing Systems \textbf{34} (2021),
  6010--6022.

\bibitem[SOW01]{smola2001regularization}
Alex~J Smola, Zoltan~L Ovari, and Robert~C Williamson, \emph{Regularization
  with dot-product kernels}, Advances in neural information processing systems
  (2001), 308--314.

\bibitem[SZS{\etalchar{+}}13]{szegedy2013intriguing}
Christian Szegedy, Wojciech Zaremba, Ilya Sutskever, Joan Bruna, Dumitru Erhan,
  Ian Goodfellow, and Rob Fergus, \emph{Intriguing properties of neural
  networks}, arXiv preprint arXiv:1312.6199 (2013).

\bibitem[Ver18]{vershynin2019high}
Roman Vershynin, \emph{High-dimensional probability: An introduction with
  applications in data science}, vol.~47, Cambridge university press, 2018.

\bibitem[Wai19]{wainwright2019high}
M.J. Wainwright, \emph{High-dimensional statistics: A non-asymptotic
  viewpoint}, Cambridge Series in Statistical and Probabilistic Mathematics,
  Cambridge University Press, 2019.

\bibitem[WL22]{Wu2021ASA}
Lei Wu and Jihao Long, \emph{A spectral-based analysis of the separation
  between two-layer neural networks and linear methods}, Journal of Machine
  Learning Research \textbf{23} (2022), no.~119, 1--34.

\end{thebibliography}
\newpage
\appendix
\section{Proof of Proposition \ref{learning-gap}} \label{appendixA}

\paragraph*{Part I: The upper bound.}
We firstly bound the empirical loss $\frac{1}{n} \sum_{i=1}^n(\hat{f}(x_i)-f(x_i))^2$. By the optimality of the estimator $\hat{f}$, we have
\begin{align}\label{Linfty-gap-bound}
         \fn \sum_{i=1}^n(\hat{f}(x_i) - f(x_i) + \xi_i)^2 \leq \fn\sum_{i=1}^n \xi_i^2.
\end{align}
Hence, 
\begin{align*}
      \frac{1}{n}\sum_{i=1}^n (\hat{f}(x_i) - f(x_i))^2 &\leq \frac{2}{n}\sum_{i=1}^n \xi_i(f(x_i) - \hat{f}(x_i)) \\
       & \leq \frac{2}{n}\sup_{\|f\|_{\cH_k} \leq 2 }\sum_{i=1}^n \xi_if(x_i) \\
       & \leq \frac{2}{n}\sup_{\|f\|_{\cH_k} \leq 2 }\langle \xi_ik(x_i,\cdot) , f\rangle_{\cH_k} \qquad (\text{use } f(x)=\langle f, k(x,\cdot)\rangle_{\cH_k})\\
       & \leq \frac{4}{n}\left\|\sum_{i=1}^n \xi_i k(x_i,\cdot) \right\|_{\cH_k} \\
       &  = \frac{4}{n}\sqrt{\sum_{i=1}^n\sum_{j=1}^n \xi_i\xi_j k(x_i,x_j)} \\
       &  \leq \frac{4}{n} \sqrt{z^\top K z},
\end{align*}
where $K \in \RR^{n \times n}$ is the kernel matrix given by $K_{ij} = k(x_i,x_j) $ and $z=(\xi_1,\xi_2,\dots,\xi_n)$. Note that 
\begin{align*}
    \EE[z^\top K z] =\sum_{i=1}^n \EE[\xi_i^2]k(x_i,x_i)\leq n\varsigma^2\kappa(1).
\end{align*}
 Note $|K_{ij}| \leq \kappa(1)$. By Hanson-Wright inequality \cite[Theorem 6.2.1]{vershynin2019high}, \wp at least $1-\delta$ over the noise $\{\xi_i\}_{i=1}^n$, we have
\begin{align*}
     z^\top K z \lesssim  n\varsigma^2\kappa(1)\left( 1 +  \log(1/\delta)\right)
\end{align*}
and thus
\begin{align*}
    \sqrt{\frac{1}{n}\sum_{i=1}^n \left(\hat{f}(x_i) - f(x_i) \right)^2} \lesssim \epsilon(n,\varsigma,\delta):= \left(\frac{\varsigma^2\kappa(1)\left(1+\log(1/\delta)\right)}{n}\right)^{1/4}.
\end{align*}
Then, by the definition of the $L^\infty$-$L^2$ gap, the $L^\infty$ estimation error is bounded by
\begin{align*}
 \| \hat{f} - f\|_\infty \lesssim \sup_{\|f\|_{\cH_k} \leq 1,\,\|f\|_{\hat{\rho}_n} \leq \epsilon(n,\varsigma,\delta) } \|f\|_\infty = \Delta_{\hat{\rho}_n,\epsilon(n,\varsigma,\delta) }.
\end{align*}
We complete the proof.
\qed
\paragraph*{Part II: The lower bound.}
For any function $f \in \cH_k$, let $\cD_{f}^n$ denote the law of $\{(x_i,f(x_i)+\xi_i)\}_{i=1}^n$.
For a given estimator $\hat{f}_n: (\cX \times \RR)^n \to \cH_k$ and a target function $f \in \cH_k$, the performance of $\hat{f}_n$ is measured by 
\begin{align*}
    d(\hat{f}_n, f) := \| \hat{f}_n(\{ (x_i,f(x_i)+ \xi_i)\}_{i=1}^n) - f \|_\infty.  
\end{align*}
We apply the two-point Le Cam's method \cite[Chapter 15.2]{wainwright2019high} to derive the minimax lower bound. For any $f_1,f_2 \in \{f\in \cH_k: \|f\|_{\cH_k} \leq 1\}$, we have
\begin{align*}
    \inf_{\hat{f}_n}\sup_{\|f\|_{\cH_k} \leq 1} \EE [d(\hat{f}_n,f)] & \geq \inf_{\hat{f}_n}\max
    \left\{\EE [d(\hat{f}_n,f_1)],  \EE  [d(\hat{f}_n,f_2)]  \right\} \\
    & \geq \frac{\|f_1-f_2\|_\infty}{2}\Big(1-  \mathrm{TV}(\cD_{f_1}^n,\cD_{f_2}^n) \Big). 
\end{align*}
Applying the Pinsker's inequality~\footnote{For two probability distributions $P,Q$ defined on the same domain, $\|P-Q\|_{\mathrm{TV}}\leq \sqrt{\mathrm{KL}(P||Q)/2}$.} \cite[Lemma 15.2]{wainwright2019high}, we obtain
\begin{align}   \label{1234}
     \inf_{\hat{f}_n}\sup_{\|f\|_{\cH_k} \leq 1} \EE [d(\hat{f}_n,f)]& \geq \frac{\|f_1-f_2\|_\infty}{2}\left(1- \sqrt{\frac{\mathrm{KL}(\cD_{f_1}^n\| \cD_{f_2}^n)}{2}} \right) .
\end{align}
The KL divergence between $\cD_{f_1}^n$ and $\cD_{f_2}^n$ can be computed by
\begin{align}
    \mathrm{KL}(\cD_{f_1}^n \| \cD_{f_2}^n ) = & \EE_{\cD_{f_1^n}}\left[\log \frac{\d \cD_{f_1}^n }{\d \cD_{f_2}^n}\right] \notag  \\
    & = \EE_{\cD_{f_1}^n}\left[\log  \left( \frac{\prod_{i=1}^n \rho(x_i)\exp\left(-\frac{\|\xi_i\|^2}{2\varsigma^2} \right) }{\prod_{i=1}^n \rho(x_i)\exp\left(-\frac{\|f_2(x_i) +\xi_i - f_1(x_i)\|^2}{2\varsigma^2} \right)}\right) \right] \notag \\
    & = \EE_{x_i\sim \rho,\,\xi_i\sim \cN(0,\varsigma^2I_d)} \left[\sum_{i=1}^n \left(\frac{\|f_2(x_i)-f_1(x_i)+\xi_i\|^2 - \|\xi_i\|^2}{2\varsigma^2} \right)\right] \notag \\
    & =\frac{n}{2\varsigma^2} \|f_2 - f_1\|_\rho^2. \label{2345}
\end{align}
Combining \eqref{1234} and \eqref{2345}, we arrive at
\begin{align} \label{3456}
  \inf_{\hat{f}_n} \sup_{\|f\|_{\cH_k} \leq 1} \EE[d(\hat{f}_n,f)] \geq \frac{\|f_1-f_2\|_\cM}{2}\left(1 - \frac{\sqrt{n}\|f_2-f_1\|_\rho}{2\varsigma} \right)
\end{align}
There must exist a function $\tf\in \cH_k,\,\|\tf\|_{\cH_k} \leq 1$ such that 
\begin{align*}
         \|\tf \|_{\rho} \leq \frac{\varsigma}{\sqrt{n}},          \quad    \|\tf\|_{\cM} \geq \frac{2}{3}\Delta_{\rho,\varsigma n^{-1/2}}.
\end{align*}
Substituting $f_1 = 0, f_2 = \tf$ into \eqref{3456} yields
\begin{align*}
     \inf_{\hat{f}_n}\sup_{\|f\|_{\cH_k} \leq 1} \EE [d(\hat{f}_n,f)] \geq \frac{1}{6} \Delta_{\rho,\varsigma n^{-1/2}}.
\end{align*}
We complete the proof.
\qed
\section{Proof of Proposition \ref{dual-lemma}} \label{appendixB}
We firstly need the following lemma.
\begin{lemma} \label{dual-lemma-lemma}
Let $\phi:\cX\times\cX\mapsto\RR$ and $\nu,\pi \in \cP(\cX)$ and $\epsilon > 0$. Define
\begin{align*}
    A = \left\{a \in L^2(\pi): \|a\|_{\pi} \leq 1,\, \left\|\int_\cX a(v)\phi(\cdot,v) \d \pi(v) \right\|_{\nu}  \leq \epsilon \right\}.  
\end{align*}
For any function $g \in L^{2}(\pi)$, we have
    \begin{align*}
    \sup_{a \in A} \int_\cX a(v)g(v) \d \pi(v) = \inf_{h \in L^{2}(\nu)} \left(\left\|g - \int_\cX h(x) \phi(x,\cdot) \d \nu(x) \right\|_{\pi} + \epsilon \|h\|_{\nu}\right).
\end{align*}
\end{lemma}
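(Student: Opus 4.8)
\textbf{Proof proposal for Lemma \ref{dual-lemma-lemma}.}

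The plan is to recognize the left-hand side as the support function of the convex set $A$ evaluated at $g$, and to compute it by Lagrangian duality. First I would observe that $A$ is a closed, convex, symmetric subset of $L^2(\pi)$ defined by one norm-ball constraint $\|a\|_\pi \le 1$ together with the quadratic (hence convex) constraint $\|S a\|_\nu \le \epsilon$, where $S: L^2(\pi) \to L^2(\nu)$ is the bounded linear operator $(Sa)(x) = \int_\cX a(v)\phi(x,v)\,\dd\pi(v)$. Its adjoint $S^*: L^2(\nu) \to L^2(\pi)$ is $(S^*h)(v) = \int_\cX h(x)\phi(x,v)\,\dd\nu(x)$, which is exactly the object appearing on the right-hand side. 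So the claim is the identity $\sup_{a \in A}\langle a, g\rangle_\pi = \inf_{h \in L^2(\nu)} \big(\|g - S^*h\|_\pi + \epsilon\|h\|_\nu\big)$.

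The key steps, in order: (1) Write $\sup_{a\in A}\langle g,a\rangle_\pi = \sup_{\|a\|_\pi \le 1}\inf_{h \in L^2(\nu)}\big(\langle g,a\rangle_\pi + \langle h, \epsilon \mathbf{1}\text{-slack}\rangle\big)$ — more precisely, introduce the constraint $\|Sa\|_\nu \le \epsilon$ via its own support-function representation $\|Sa\|_\nu = \sup_{\|h\|_\nu \le 1}\langle h, Sa\rangle_\nu$, so that $a \in A$ iff $\|a\|_\pi\le 1$ and $\langle h, Sa\rangle_\nu \le \epsilon\|h\|_\nu$ for all $h$; then rewrite the supremum over $A$ as a constrained concave maximization. (2) Apply Sion's minimax theorem (or Lagrangian strong duality for convex programs with a Slater point — note $a = 0$ is strictly feasible) to swap the order of $\sup_{\|a\|_\pi\le1}$ and $\inf$ over the dual variable $h \in L^2(\nu)$, obtaining $\inf_{h}\sup_{\|a\|_\pi \le 1}\big(\langle g, a\rangle_\pi - \langle h, Sa\rangle_\nu + \epsilon\|h\|_\nu\big)$. (3) Compute the inner supremum explicitly: $\sup_{\|a\|_\pi\le 1}\langle g - S^*h, a\rangle_\pi = \|g - S^*h\|_\pi$, which collapses the expression to $\inf_{h \in L^2(\nu)}\big(\|g - S^*h\|_\pi + \epsilon\|h\|_\nu\big)$, the desired right-hand side. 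Step (3) uses only Cauchy–Schwarz and the identity $\langle h, Sa\rangle_\nu = \langle S^*h, a\rangle_\pi$.

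The main obstacle is justifying the minimax swap rigorously in this infinite-dimensional setting: Sion's theorem needs one of the two feasible sets to be compact, and $L^2(\nu)$ is not. The cleanest fix is to first argue that the dual infimum is attained on a bounded set — indeed $\|g - S^*h\|_\pi + \epsilon\|h\|_\nu \ge \epsilon\|h\|_\nu$ forces any near-minimizer to have $\|h\|_\nu \lesssim \|g\|_\pi/\epsilon$ — and then restrict $h$ to a closed ball, which is weakly compact in the Hilbert space $L^2(\nu)$; the objective is weakly lower semicontinuous in $h$ (as a sum of norms composed with bounded linear/affine maps) and concave (in fact linear up to the sup) in $a$ on the weakly compact ball $\{\|a\|_\pi \le 1\}$, so Sion applies on these restricted domains. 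Alternatively, one can invoke Fenchel–Rockafellar duality directly: the primal is $\inf_a \big(\iota_{\|\cdot\|_\pi \le 1}(a) - \langle g, a\rangle_\pi + \iota_{\|\cdot\|_\nu \le \epsilon}(Sa)\big)$, and the constraint qualification (continuity of $\iota_{\|\cdot\|_\nu\le\epsilon}\circ S$ at an interior point, e.g. $a=0$ since $\epsilon > 0$) gives strong duality with the conjugate expressed through $S^*$; unwinding the conjugates of the two indicator functions reproduces exactly the right-hand side. I would present the Fenchel–Rockafellar route as the main argument since the constraint qualification is transparent here ($a = 0$ lies in the interior of the feasible region because $\epsilon > 0$ and $S$ is bounded), and mention the minimax-swap viewpoint as intuition.
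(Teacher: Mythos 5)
Your proposal is correct, and it takes a genuinely different route from the paper. The paper proves the two inequalities separately. The easy direction ($\mathrm{LHS} \le \mathrm{RHS}$) is your weak-duality computation in disguise: for each fixed $h$, write $\|g - S^*h\|_\pi = \sup_{\|a\|_\pi\le 1}\langle a, g - S^*h\rangle_\pi$ and restrict $a$ to $A$. For the hard direction, instead of invoking a packaged duality theorem, the paper works with the sublinear functional $\cT(l) = \inf_h(\|l - S^*h\|_\pi + \epsilon\|h\|_\nu)$ on $L^2(\pi)$, extends the linear functional $\cT|_{\mathrm{span}\{g\}}$ by Hahn--Banach to a dominating linear functional $\tilde\cT \le \cT$, observes $\tilde\cT(l)\le\|l\|_\pi$ (take $h=0$) so Riesz gives $\tilde\cT = \langle a_g,\cdot\rangle_\pi$ with $\|a_g\|_\pi\le 1$, and finally verifies $a_g\in A$ by testing domination against $l = S^*h$, yielding $\langle h, Sa_g\rangle_\nu = \tilde\cT(S^*h) \le \epsilon\|h\|_\nu$ for all $h$. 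Your Fenchel--Rockafellar (or Sion) argument and the paper's Hahn--Banach argument are of course cousins, but at the level of the write-up they are distinct: yours is shorter and leans on a citable theorem plus a transparent Slater point at $a=0$, while the paper's is self-contained, avoids any discussion of constraint qualifications, and produces the optimizing $a_g$ explicitly via Riesz. One minor point to make explicit in either version: the operator $S$ must be bounded from $L^2(\pi)$ to $L^2(\nu)$, which follows from $\|S\|^2 \le \int k(x,x)\,\dd\nu(x) < \infty$; you use this both to make $S^*$ well-defined and to have $\iota_{\|\cdot\|_\nu\le\epsilon}\circ S$ continuous at $0$, and the paper uses it implicitly in the same places.
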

\begin{proof}

\underline{\bf Step I:}~~
 On the one hand,  we have  for any $h \in L^{2}(\nu)$ that
    \begin{align*}
      &\left\|g - \int_\cX h(x) \phi(x,\cdot) \d \nu(x) \right\|_{\pi} + \epsilon \|h\|_{\nu}  \\ 
      &= \sup_{\|a\|_{\pi} \leq 1}\left\langle a, g - \int_\cX h(x) \phi(x,\cdot) \d \nu(x)\right\rangle_\pi + \epsilon  \|h\|_{\nu}\\
      & =  \sup_{\|a\|_{\pi} \leq 1}\left[ \int_\cX a(v)g(v) \pi(v) - \int_\cX h(x) \left( \int_\cX a(v)\phi(x,v) \d \pi(v)  \right)\d \nu(x) \right] + \epsilon \|h\|_{r',\nu} \\
      & \geq \sup_{a\in A}\left[ \int_\cX a(v)g(v) \pi(v) - 
            \|h\|_{\nu}\left\|\int_\cX a(v)\phi(\cdot,v) \d \pi(v)\right\|_{\nu}\right] + \epsilon \|h\|_{\nu}\\ 
      &\geq \sup_{a\in A}\int_\cX a(v)g(v) \pi(v),
    \end{align*}
    where the last step uses the property that $\left\|\int_\cX a(v)\phi(\cdot,v) \d \pi(v)\right\|_{\nu}\leq \epsilon$ for $a\in A$.
    Hence,
        \begin{align}\label{eqn: 008}
    \sup_{a \in A} \int_\cX a(v)g(v) \d \pi(v) \leq \inf_{h \in L^{2}(\nu)}\left( \left\|g - \int_\cX h(x) \phi(x,\cdot) \d \nu(x) \right\|_{\pi} + \epsilon \|h\|_{\nu}\right).
\end{align}

\underline{\bf Step II:}~~ On the other hand, consider the functional $\cT: L^{2}(\pi)\mapsto\RR$ given by 
\begin{align*}
    \cT(l) := \inf_{h \in L^{2}(\nu)} \left(\left\|l - \int_\cX h(x) \phi(x,\cdot) \d \nu(x) \right\|_{\pi} + \epsilon \|h\|_{\nu}\right).
\end{align*}
It is not hard to verify that 
\begin{itemize}
\item $\cT$ is sublinear on $L^{2}(\pi)$, i.e., $\cT(l_1+l_2)\leq \cT(l_1)+\cT(l_2)$,
\item $\cT(\lambda l)=\lambda \cT(l)$ for any $l\in L^{2}(\pi)$.
\end{itemize}
For a given $g\in  L^2(\pi)$, let $G=\mathrm{span}\{g\}$. By the Hahn-Banach Theorem \cite[Section 5]{folland1999real}, there exists a linear functional $\tilde{\cT}$ on $L^{2}(\pi)$ such that 
\begin{itemize}
\item $\tilde{\cT}(g) = {\cT}(g)$, i.e., 
\begin{align*}
   \tilde{\cT}(g) =  \inf_{h \in L^{2}(\nu)} \left\|g - \int_\cX h(x) \phi(x,\cdot) \d \nu(x) \right\|_{\pi} + \epsilon \|h\|_{\nu};
\end{align*}
\item $\tilde{\cT}(l) \leq {\cT}(l)$ for any $l \in L^{2}(\pi)$, i.e.,
\begin{align} \label{6666}
      \tilde{\cT}(l) \leq \inf_{h \in L^{2}(\nu)} \left\|l - \int_\cX h(x) \phi(x,\cdot) \d \nu(x) \right\|_{\pi} + \epsilon \|h\|_{\nu},\quad \text{for any}\, l \in L^{2}(\pi).
\end{align}
\end{itemize}

Taking $h=0$ in \eqref{6666} gives 
\[
\tilde{\cT}(l) \leq \|l\|_{\pi} \quad \text{for any}\, l \in L^{2}(\pi).
\]
Hence, $\|\tilde{\cT}\|_{\mathrm{op}}\leq 1$. By Riesz representation theorem, there exists $a_g \in L^2(\pi)$ with $\|a\|_{\pi} \leq 1$ such that 
\begin{align}\label{eqn: 009}
    \tilde{\cT}(l) = \int_\cX a_g(v)l(v) \d \pi(v),\quad \text{for any}\, l \in L^{2}(\pi),
\end{align}
where $a_g(\cdot)$ depends on $g$ as $\tilde{\cT}$ depends on $g$. 

Noticing that for any $ h\in L^{2}(\nu)$,
\begin{align*}
  \int_\cX h(x)\left(\int_\cX a_g(v)\phi(x,v)\d \pi(v)\right)\d \nu(x)  =  \tilde{\cT}\left(\int_\cX h(x)\phi(x,\cdot) \d \nu(x) \right) \leq \epsilon \|h\|_{\nu},
\end{align*}
we can conclude that $a_g(\cdot)$ satisfies 
$
\left\|\int_\cX a_g(v)\phi(\cdot,v)\d \pi(v)\right\|_{\nu}\leq \epsilon,
$
implying 
\[
a_g\in A.
\]
Thus, 
    \begin{align}\label{eqn: 007}
\notag    \sup_{a \in A} \int_\cX a(v)g(v) \d \pi(v)& \geq  \int_\cX a_g(v)g(v) \d \pi(v)=\tilde{\cT}(g)=\cT(g)\\ 
    &=\inf_{h \in L^2(\nu)} \left(\left\|g - \int_\cX h(x) \phi(x,\cdot) \d \nu(x) \right\|_{\pi} + \epsilon \|h\|_{\nu}\right)
    \end{align}
By combining \eqref{eqn: 008} and \eqref{eqn: 007}, we complete the proof.
\end{proof}

\underline{\bf Proof of Proposition \ref{dual-lemma}.}~~
We note that the function class $\cG_\phi$ could be written as
$$\cG_\phi = \left\{\int_{\cX}a(x)\phi(x,\cdot)\d \mu(x):\,\mu \in \cM(\cX),\,\|\mu\|_{\mathrm{TV}} \leq 1 \right\}  $$
and the RKHS ball $\cH_k^1: = \{f \in \cH_k: \|f\|_{\cH_k} \leq 1\}$ could be written as
$$ \cH_k^1 = \left\{\int_\cX a(v)\phi(\cdot,v) \d \pi(v): \|a\|_\pi \leq 1 \right\}.     $$
Using the duality form $\|f\|_{\infty} = \sup\limits_{\mu \in \cM(\cX),\,\|\mu\|_{\mathrm{TV}}} \int_\cX f(x) \d \mu(x)$ and combining Lemma \ref{dual-lemma-lemma}, we have
\begin{align*}
    \sup_{\|f\|_{\cH_k} \leq 1,\|f\|_{\nu} \leq \epsilon} \|f\|_{\infty} &= \sup_{\|f\|_{\cH_k} \leq 1,\|f\|_{\nu} \leq \epsilon} \sup_{\mu \in \cM(\cX),\,\|\mu\|_{\mathrm{TV}} \leq 1} \int_\cX f(x)\d \mu(x) \\
   & =  \sup_{\mu \in \cM(\cX),\,\|\mu\|_{\mathrm{TV}} \leq 1} \sup_{a \in A} \int_\cX \left(\int_\cX a(v)\phi(x,v) \d \pi(v) \right) \d \mu(x)  \\    
    & =\sup_{\mu \in \cM(\cX),\,\|\mu\|_{\mathrm{TV}} \leq 1} \sup_{a \in A} \int_\cX a(v) \left(\int_\cX \phi(x,v) \d \mu(x) \right)\d \pi(v)  \\
    & \stackrel{(i)}{=} \sup_{\mu \in \cM(\cX),\,\|\mu\|_{\mathrm{TV}} \leq 1} \inf_{h \in L^{2}(\nu)} \left\|\int_\cX \phi(x,\cdot)\d \mu(x) - \int_\cX h(x)\phi(x,\cdot)\d \nu(x)\right\|_{\pi} + \epsilon \|h\|_{\nu} \\
    & = \sup_{g \in \cG_\phi}\inf_{h \in L^{2}(\nu)} \left\|g - \int_\cX h(x)\phi(x,\cdot)\d \nu(x)\right\|_{\pi} + \epsilon \|h\|_{\nu}.
\end{align*}  
where (i) comes from Lemma \ref{dual-lemma-lemma}. We complete the proof.
\qed
\qed
\section{Proof of Proposition \ref{approximation-Barron}} \label{appendixC}
\subsection{The Upper Bound}
Our proof needs the following two lemmas related to the linear approximation of parametric function and the random feature approximation of RKHS. 
\begin{lemma} \label{app-kernel}
 For any $x \in \SS^{d-1}$, there exists a function $h \in \cH_k $ such that $\|h\|_{\cH_k} \leq \sqrt{m}$ and 
\begin{align} \label{eq-app-kernel}
   \|\sigma_x -  h \|_{\tau_{d-1}}  \lesssim \sqrt{q(d,L)L(m)}, 
\end{align}
where $\sigma_x :\SS^{d-1} \to \RR$ is the single neuron $v \mapsto \sigma(v^\top x)$.
\end{lemma}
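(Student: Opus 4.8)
The plan is to take $h$ to be the truncation, at a suitable harmonic degree, of the spherical‑harmonic expansion of the single neuron $\sigma_x$, letting the constant $q(d,L)$ absorb the rounding loss caused by the fact that the eigenvalues of a dot‑product kernel occur in blocks of size $N(d,k)$. First I would expand $\sigma_x$: using the identity $\sigma(t)=\sum_{k\ge 0}\sqrt{\lambda_k}\,N(d,k)P_{k,d}(t)$ from the proof of Proposition~\ref{integral} together with the addition formula \eqref{eqn: dot-2}, one gets
\[
\sigma_x(\cdot)=\sigma(\,\cdot^{\top}x)=\sum_{k\ge 0}\sqrt{\lambda_k}\sum_{j=1}^{N(d,k)}Y_{k,j}(x)\,Y_{k,j}(\cdot),
\]
which converges in $L^2(\tau_{d-1})$ because $\|\sigma_x\|_{\tau_{d-1}}^2=\sum_k\lambda_kN(d,k)=\kappa(1)<\infty$; here I use that \eqref{eqn: dot-2} at $x'=x$ gives $\sum_j Y_{k,j}(x)^2=N(d,k)P_{k,d}(1)=N(d,k)$. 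Thus, in the orthonormal basis $\{Y_{k,j}\}$ of $L^2(\tau_{d-1})$, the coefficient of $Y_{k,j}$ in $\sigma_x$ equals $\sqrt{\lambda_k}\,Y_{k,j}(x)$, whose squared RKHS weight is $Y_{k,j}(x)^2$.

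Next, write $n_{K'}:=\sum_{k=0}^{K'}N(d,k)$ and let $K$ be the largest integer with $n_K\le m$ (well defined since $n_0=1\le m$). Define the degree‑$K$ truncation $h:=\sum_{k=0}^{K}\sqrt{\lambda_k}\sum_{j=1}^{N(d,k)}Y_{k,j}(x)Y_{k,j}(\cdot)\in\cH_k$. By the series form of the RKHS norm,
\[
\|h\|_{\cH_k}^2=\sum_{k=0}^{K}\frac{1}{\lambda_k}\sum_{j=1}^{N(d,k)}\big(\sqrt{\lambda_k}\,Y_{k,j}(x)\big)^2=\sum_{k=0}^{K}N(d,k)=n_K\le m,
\]
so $\|h\|_{\cH_k}\le\sqrt m$, and by Parseval in $L^2(\tau_{d-1})$,
\[
\|\sigma_x-h\|_{\tau_{d-1}}^2=\sum_{k>K}\lambda_kN(d,k)=\Lambda_{k,\tau_{d-1}}(n_K)\le L(n_K),
\]
where the middle equality uses that, for a sphere dot‑product kernel, the $n_K$ largest eigenvalues (counted with multiplicity) are exactly those of degree $\le K$.

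It remains to bound $L(n_K)$ by $L(m)$, and this is the only quantitative input. From $N(d,k)=\tfrac{2k+d-2}{k}\binom{k+d-3}{d-2}$ one checks by an elementary computation that $N(d,K+1)\le d\,N(d,K)$ for every $K\ge 0$, hence $n_{K+1}=n_K+N(d,K+1)\le (d+1)\,n_K$. Maximality of $K$ gives $n_{K+1}>m$, so $(d+1)n_K\ge m$; since $L$ is decreasing and by the definition $q(d,L)=\sup_{k\ge1}L(k)/L((d+1)k)$,
\[
L(n_K)\le q(d,L)\,L\big((d+1)n_K\big)\le q(d,L)\,L(m).
\]
Combining with the error estimate above yields $\|\sigma_x-h\|_{\tau_{d-1}}^2\le q(d,L)L(m)$, that is, $\|\sigma_x-h\|_{\tau_{d-1}}\lesssim\sqrt{q(d,L)L(m)}$, which is \eqref{eq-app-kernel}.

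The argument is essentially bookkeeping, and I do not expect a serious obstacle. The two points that deserve care are: (i) the identity $\|\sigma_x-h\|_{\tau_{d-1}}^2=\Lambda_{k,\tau_{d-1}}(n_K)$, which relies on the spectrum of a sphere dot‑product kernel being organized by harmonic degree, so that cutting at degree $K$ is the same as discarding all but the top $n_K$ eigenvalues; and (ii) the block‑growth estimate $n_{K+1}\le (d+1)\,n_K$, which is precisely what forces the factor $d+1$ appearing in the definition of $q(d,L)$.
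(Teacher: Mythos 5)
Your proof is correct and follows essentially the same strategy as the paper: truncate the spherical‑harmonic expansion of $\sigma_x$ at the largest degree $K$ with $n_K\le m$, bound $\|h\|_{\cH_k}^2$ and the tail error in the $\{Y_{k,j}\}$ basis, and use $n_{K+1}\le(d+1)n_K$ to relate $L(n_K)$ to $q(d,L)L(m)$. The only (cosmetic) difference is that you write the harmonic coefficients of $\sigma_x$ explicitly as $\sqrt{\lambda_k}Y_{k,j}(x)$ and invoke Parseval, whereas the paper characterizes $h$ as the optimal $L^2$ projection and evaluates the same quantities via rotation‑invariance integral identities.
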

\begin{proof}
Let $m_k = \sum_{i=0}^k N(d,i)$. Assume that $m \in [m_{k},m_{k+1}-1]$, we choose $h$ as the optimal approximation of $\sigma(x^\top \cdot)$ in the span of $\{Y_{i,j}\}_{1\leq i \leq k,\, 1\leq j \leq N(d,i) }$, that is, $\bar{h} = \sum_{i=1}^k \sum_{j=1}^{N(d,i)} c_{i,j}Y_{i,j}$ with 
\begin{align*}
\{c_{i,j}\} =    \argmin_{\{\alpha_{i,j}\}_{1\leq i \leq k,\,1\leq j \leq N(d,i)}} \left\|\sigma_x - \sum_{i=1}^k \sum_{j=1}^{N(d,i)}\alpha_{i,j}Y_{i,j} \right\|_{\tau_{d-1}}^2 
\end{align*}
Now we verify that $\bar{h}(\cdot)$ satisfies the conditions in Lemma \ref{app-kernel}. Recall that the spherical harmonics is related to the Legendre polynomials: 
\begin{align*}
    \sum_{j=1}^{N(d,k)}Y_{k,j}(v)Y_{k,j}(v')  = N(d,k)P_k(v^\top v'),\quad \forall v,v' \in \SS^{d-1}.
\end{align*}
Since $\left\{Y_{i, j}\right\}_{0 \leq i \leq k, 1 \leq j \leq N(d, i)}$ is orthonormal in $L^2\left(\tau_{d-1}\right)$, we have
$$
\inf _{\left\{c_{i, j}\right\}_{0 \leq i \leq k, 1 \leq j \leq N(d, i)}}\left\|\sigma_x-\sum_{i=0}^k \sum_{j=1}^{N(d, i)} c_{i, j} Y_{i, j}\right\|_{\tau_{d-1}}^2=\left\|\sigma_x\right\|_{\tau_{d-1}}^2-\sum_{i=0}^k \sum_{j=1}^{N(d, i)}\left\langle Y_{i, j}, \sigma_x\right\rangle_{\tau_{d-1}}^2
$$
Hence,
$$
\begin{aligned}
&\left\|\sigma_x\right\|_{\tau_{d-1}}^2-\sum_{i=0}^k \sum_{j=1}^{N(d, i)}\left\langle Y_{i, j}, \sigma_x\right\rangle_{\tau_{d-1}}^2\\
&=\int_{\mathbb{S}^{d-1}}|\sigma(x^\top v)|^2 \mathrm{~d} \tau_{d-1}(v)-\sum_{i=0}^k \int_{\mathbb{S}^{d-1}} \int_{\mathbb{S}^{d-1}} \sigma(x^\top v) \sigma(x^\top v^{\prime}) \sum_{j=1}^{N(d, i)} Y_{i, j}(v) Y_{i, j}(v^{\prime}) \mathrm{d} \tau_{d-1}(v) \mathrm{d} \tau_{d-1}(v^{\prime})\\
&=\int_{\mathbb{S}^{d-1}}|\sigma(x^\top v)|^2 \mathrm{~d} \tau_{d-1}(v)-\sum_{i=0}^k N(d, i) \int_{\mathbb{S}^{d-1}} \int_{\mathbb{S}^{d-1}} \sigma(x^\top v) \sigma(x^\top v^{\prime}) P_i(v^\top v^{\prime}) \mathrm{d} \tau_{d-1}(v) \mathrm{d} \tau_{d-1}\left(v^{\prime}\right)
\end{aligned}
$$
By rotation invariance, we have
\begin{align}
   & \int_{\SS^{d-1}} |\sigma(x^\top v) |^2\d \tau_{d-1}(x) =  \int_{\SS^{d-1}}\int_{\SS^{d-1}} |\sigma(x^\top v) |^2\d \tau_{d-1}(x) \d \tau_{d-1}(v)  \notag \\ &= \int_{\SS^{d-1}}\kappa(v^\top v) \d \tau_{d-1}(v) = \sum_{i=0}^\infty N(d,i)\lambda_i, \label{invariance-app-1}
\end{align}
and 
\begin{align}
  &  \sum_{i=0}^k N(d, i) \int_{\mathbb{S}^{d-1}} \int_{\mathbb{S}^{d-1}} \sigma(x^\top v) \sigma(x^\top v^{\prime}) P_i(v^\top v^{\prime}) \mathrm{d} \tau_{d-1}(v) \mathrm{d} \tau_{d-1}\left(v^{\prime}\right) \notag \\
  & = \sum_{i=0}^k N(d, i) \int_{\mathbb{S}^{d-1}} \int_{\mathbb{S}^{d-1}} \int_{\mathbb{S}^{d-1}} \sigma(x^\top v) \sigma(x^\top v^{\prime})\mathrm{d} \tau_{d-1}(x)  P_i(v^\top v^{\prime}) \mathrm{d} \tau_{d-1}(v) \mathrm{d} \tau_{d-1}(v^{\prime})   \notag \\
  & = \sum_{i=0}^k N(d, i)  \int_{\mathbb{S}^{d-1}} \int_{\mathbb{S}^{d-1}} \kappa(v^\top v')P_i(v^\top v')  \mathrm{d} \tau_{d-1}(v) \mathrm{d} \tau_{d-1}(v^{\prime}) \notag  \\
  & = \sum_{i=0}^k\sum_{j=1}^{N(d,i)} \int_{\SS^{d-1}}  \int_{\SS^{d-1}} \kappa(v^\top v')  Y_{i,j}(v') \d \tau_{d-1}(v')                       \notag        \\
  & = \sum_{i=0}^k N(d,i)\lambda_i, \label{invariance-app-2}
\end{align}
where the last equation is because $\{Y_{i,j}\}_{1\leq i,j\leq n}$ are the eigenfunctions of the kernel operator: 
$$\int_{\SS^{d-1}}\kappa(v^\top v')Y_{i,j}(v') \d \tau_{d-1}(v') = \lambda_iY_{i,j}(v). $$
Combining \eqref{invariance-app-1} and \eqref{invariance-app-2}, we arrive at
\begin{align*}
              \|\sigma_x - h \|_{\tau_{d-1}}^2 = \sum_{i=k+1}^\infty N(d,i)\lambda_i \leq L(m_k).
\end{align*}
Note that $\frac{m_{k+1}}{m_k} \leq d+1$(see \cite[Proposition 3]{Wu2021ASA} for details), we have
\begin{align*} 
    L(m_k) \leq \frac{L(m_{k+1})}{L(m_k)}L(m)  \leq q(d,L)L(m).
\end{align*}
Thus we complete the proof of \eqref{eq-app-kernel}. To bound the RKHS norm of $\bar{h}$, we note that the optimal approximation is given by
\begin{align*}
    c_{i,j} = \langle \sigma_x, Y_{i,j} \rangle_{\tau_{d-1}}.
\end{align*}
Similar to the proof of \eqref{eq-app-kernel}, we have
\begin{align*}
 &   \left\|\sum_{i=1}^k \sum_{j=1}^{N(d,i)}c_{i,j}Y_{i,j} \right\|_{\cH_k}^2 =  \sum_{i=0}^k \frac{1}{\lambda_i}\sum_{j=1}^{N(d,i)}\langle Y_{i,j}, \sigma_x \rangle_{\tau_{d-1}}^2 \\
    & = \sum_{i=0}^k \frac{1}{\lambda_i}N(d, i) \int_{\mathbb{S}^{d-1}} \int_{\mathbb{S}^{d-1}} \kappa(x^\top x') P_i\left(x^{\mathrm{T}} x^{\prime}\right) \mathrm{d} \tau_{d-1}(x) \mathrm{d} \tau_{d-1}\left(x^{\prime}\right)\\
    & = \sum_{i=1}^k N(d,i) = m_k \leq m.
\end{align*}
We complete the proof.
\end{proof}
\begin{lemma} \label{RFM-app-kernel}
 Suppose that $x_i \stackrel{iid}{\sim}\tau_{d-1}$, with probability at least $1-\delta$, we have    
 $$
 \sup_{\|h\|_{\cH_k} \leq 1} \inf_{c_1,\cdots,c_n} \left\|h-\frac{1}{n}\sum_{i=1}^n c_i\sigma_{x_i} \right\|_{\tau_{d-1}}   +\frac{\epsilon}{\sqrt{n}}\|\bm{c}\|\leq \epsilon + e(n,\delta),    $$
where $e(n,\delta) = \sqrt{\frac{\kappa(1)\left(\log n + \log(1/\delta)\right)}{n}}$.
\end{lemma}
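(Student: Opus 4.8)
The plan is to build, for each target $h$ with $\|h\|_{\cH_k}\le 1$, an explicit near‑optimal coefficient vector $\bm c$ out of the random‑feature representation of $h$, to control the two resulting terms in expectation, and then to upgrade this to a high‑probability bound that is uniform over the unit ball. Since $k(x,x')=\int_{\SS^{d-1}}\sigma(v^\top x)\sigma(v^\top x')\,\d\tau_{d-1}(v)$, the Mercer decomposition shows that the infimum in \eqref{eqn: RKHS-rf} is attained: every $h$ with $\|h\|_{\cH_k}\le1$ can be written $h=\int a(v)\,\sigma(v^\top\cdot)\,\d\tau_{d-1}(v)$ with $\|a\|_{\tau_{d-1}}=\|h\|_{\cH_k}\le1$. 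Taking $c_i:=a(x_i)$, the function $g:=\frac1n\sum_i c_i\sigma_{x_i}$ is an i.i.d.\ Monte Carlo average approximating $h(u)=\EE_{v\sim\tau_{d-1}}[a(v)\sigma(v^\top u)]$. Using that $\int_{\SS^{d-1}}\sigma(v^\top u)^2\,\d\tau_{d-1}(u)=\kappa(1)$ for every $v$ (rotational invariance), a short second‑moment computation gives $\EE_{x_{1:n}}\|h-g\|_{\tau_{d-1}}^2\le\kappa(1)\|a\|_{\tau_{d-1}}^2/n\le\kappa(1)/n$ and $\EE_{x_{1:n}}[\tfrac1n\|\bm c\|^2]=\|a\|_{\tau_{d-1}}^2\le1$, so the claimed bound already holds in expectation for a fixed $h$.

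To pass to high probability for a fixed $h$ I would apply a Hilbert‑space Bernstein inequality to the vector average $g-\EE g\in L^2(\tau_{d-1})$ together with a scalar Bernstein inequality for $\tfrac1n\sum_i a(x_i)^2$; since the density $a$ is merely in $L^2$, I would first split $a=a\mathbf 1_{\{|a|\le M\}}+a\mathbf 1_{\{|a|>M\}}$, define $\bm c$ from the truncated part (so $\|\bm c\|_\infty\le M$ and the above concentration inequalities become usable), bound the discarded tail deterministically by $\sqrt{\kappa(1)}\,\|a\mathbf 1_{\{|a|>M\}}\|_{L^1(\tau_{d-1})}\le\sqrt{\kappa(1)}/M$ (using $\|\sigma_v\|_{\tau_{d-1}}^2=\kappa(1)$ for all $v$, Cauchy--Schwarz and Markov), and choose $M\asymp\sqrt n$ to balance truncation against the concentration remainder. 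For the supremum over $\|h\|_{\cH_k}\le1$, the cleanest route I see is a duality reduction in the spirit of Lemma~\ref{dual-lemma-lemma}: the left‑hand side equals $\sup\{\|\chi\|_{\tau_{d-1}}:\|\chi\|_{\cH_k}\le1,\ \|\chi\|_{\hat\rho_n}\le\epsilon\}$, so it suffices to establish the one‑sided uniform deviation $\|\chi\|_{\tau_{d-1}}^2\le\|\chi\|_{\hat\rho_n}^2+O\!\big(\kappa(1)(\log n+\log(1/\delta))/n\big)$ over the RKHS unit ball. This I would prove by a localized uniform‑convergence argument for the class $\{\chi^2:\|\chi\|_{\cH_k}\le1\}$, exploiting $\|\chi\|_\infty\le\sqrt{\kappa(1)}$, the Bernstein‑type variance control $\mathrm{Var}(\chi^2)\le\kappa(1)\EE[\chi^2]$, and the fact that the Rademacher complexity of the RKHS ball is at most $\sqrt{\kappa(1)/n}$, so its critical radius is $\lesssim\sqrt{\kappa(1)/n}$; assembling these pieces gives $\|\chi\|_{\tau_{d-1}}\le\epsilon+e(n,\delta)$.

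I expect the uniformity step to be the main obstacle: the estimate must hold simultaneously over the infinite‑dimensional ball $\{\|h\|_{\cH_k}\le1\}$ while remaining dimension‑free, i.e.\ depending on the spectrum only through the trace $\kappa(1)=\sum_i\mu_i$, which rules out a naive $\epsilon$‑net of the whole ball and makes either the duality reduction above or a carefully localized uniform‑convergence bound essential. By contrast, the unbounded random‑feature density $a$ is only a secondary nuisance, handled by the truncation above.
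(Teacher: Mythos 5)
Your opening observation is sound: writing $h=\int a(v)\sigma_v\,\d\tau_{d-1}(v)$ with $\|a\|_{\tau_{d-1}}\le1$ and taking $c_i=a(x_i)$ gives $\mathbb{E}\|h-\tfrac1n\sum_ic_i\sigma_{x_i}\|_{\tau_{d-1}}^2\le\kappa(1)/n$ and $\mathbb{E}[\tfrac1n\|\bm c\|^2]\le1$, and your duality rewriting of the left-hand side as $\sup\{\|\chi\|_{\tau_{d-1}}:\|\chi\|_{\cH_k}\le1,\ \|\chi\|_{\hat\rho_n}\le\epsilon\}$ is correct (it is precisely Lemma~\ref{dual-lemma-lemma} applied with $\pi=\tau_{d-1}$, $\nu=\hat\rho_n$, followed by an extra supremum over the RKHS ball). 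You also correctly identify the uniformity step as the crux.

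The gap is in the final step. The paper does not run any Rademacher argument here: it simply invokes \cite[Proposition~1]{bach2017equivalence}, an operator-level concentration bound for the empirical covariance $\hat\Sigma=\tfrac1n\sum_i\sigma_{x_i}\otimes\sigma_{x_i}$ in which the key quantity is the coherence $d(t)=\sup_x\langle\sigma_x,(\Sigma+tI)^{-1}\sigma_x\rangle$, crudely bounded by $\kappa(1)/t$. The condition $5d(t)\log(16d(t)/\delta)\le n$ then admits $t\asymp\kappa(1)\log(n/\delta)/n$, and Bach's conclusion is that the \emph{squared} approximation error is at most $4t$, giving exactly $e(n,\delta)^2\asymp\kappa(1)\log(n/\delta)/n$. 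Your proposed substitute --- a localized empirical-process bound for $\{\chi^2:\|\chi\|_{\cH_k}\le1\}$ --- would not reach this rate, and the reason is your claim that ``the critical radius is $\lesssim\sqrt{\kappa(1)/n}$''. That claim is false: the localized Rademacher complexity of the RKHS unit ball at scale $r$ is of order $\sqrt{\tfrac1n\sum_j\min(r^2,\mu_j)}$, so the fixed-point radius depends on the whole spectrum, not just the trace. For $\mu_j\asymp j^{-\alpha}$ one gets a critical radius on the order of $n^{-\alpha/(2(\alpha+1))}$, which is strictly worse than $n^{-1/2}$ for every finite $\alpha$; and even the entirely spectrum-agnostic bound obtained by plugging the global Rademacher complexity into the fixed-point equation only yields radius $\asymp(\kappa(1)/n)^{1/4}$, again worse than the lemma's $e(n,\delta)$. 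In short, the local Rademacher machinery (Wainwright Ch.~14 style) is not tight for this particular RKHS ``population norm from empirical norm'' comparison, whereas Bach's degrees-of-freedom argument is, because it passes through the operator inequality $\Sigma\preceq 2\hat\Sigma+tI$ directly via a whitened matrix Bernstein bound. To salvage your plan you would need to replace the localization step with precisely this kind of operator concentration (Bach's Proposition~1 or its relatives, e.g.\ Rudi--Rosasco type bounds); once you do, the duality detour becomes unnecessary, which is why the paper's proof is so short.

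A minor further note: the truncation-plus-Bernstein argument for a fixed $h$ is fine but orthogonal to the actual difficulty; the paper does not use it, and it cannot be turned into the uniform statement by an $\epsilon$-net over the ball, as you yourself observe.
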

\begin{proof}
We directly apply \cite[Proposition 1]{bach2017equivalence}. With probability at least $1-\delta$ over samples $\{x_i\}_{i=1}^n$, we have
$$
\sup _{\|h\|_{\cH_k} \leq 1} \inf_{ \|\bm{c}\|^2 \leq 4n} \left\|h -\frac{1}{n} \sum_{i=1}^n c_i \sigma_{x_i}  \right\|_{\tau_{d-1}} \leq 4 t
$$
where $t$ satisfies that
\begin{align}\label{12}
5 d(t) \log \left(\frac{16 d(t)}{\delta}\right)\leq n
\end{align}
and $d(t)=\sup _{x \in \SS^{d-1}}\left\langle \sigma_x,(\Sigma+t I)^{-1} \sigma_x \right\rangle_{\tau_{d-1}}$, in which $\Sigma$ is a self-adjoint, positive semi-definite operator on $L^2(\tau_{d-1})$ (see the detailed definition in \cite[Section 2.1]{bach2017equivalence}). 
 Notice that 
\begin{align}\label{13}
\begin{aligned}
d(t) \leq & t^{-1} \sup _{x \in \SS^{d-1}}\langle \sigma_x, \sigma_x \rangle_{\tau_{d-1}}\leq  t^{-1} \kappa(1).
\end{aligned}
\end{align}
From \eqref{12},\eqref{13}, we have
$$
\begin{aligned}
& \sup _{\|h\|_{\cH_k} \leq 1}\inf_{\|\bm{c}\|^2 \leq 4n} \left\|h-\frac{1}{n}\sum_{i=1}^n c_i\sigma_{x_i} \right\|_{\tau_{d-1}}^2 \lesssim \frac{\kappa(1)}{n}\left[1+\log \left(\frac{n}{\delta}\right)\right].
\end{aligned}
$$
The above implies 
\begin{align*} 
 \sup_{\|h\|_{\cH_k} \leq 1} \inf_{c_1,\cdots,c_n} \left\|h - \sum_{i=1}^n c_i\sigma_{x_i} \right\|_{\tau_{d-1}} + \frac{\epsilon}{\sqrt{n}}\|\bm{c}\| \lesssim \left(\epsilon + \sqrt{\frac{\kappa(1)\left(\log n + \log(1/\delta)\right)}{n}} \right).
\end{align*}
We complete the proof.
\end{proof}

\paragraph*{Proof of the first part(upper bound) of Proposition \ref{approximation-Barron}.} 
By triangle inequality, for any $m \in \NN_+ $ we could decompose the left hand side of \eqref{RFMapp} as
\begin{align}
&  \sup_{g \in \cG_\phi } \inf_{\bm{c} \in \RR^n}  \left\|g - \sum_{i=1}^n c_i\sigma_{x_i} \right\|_{\tau_{d-1}} + \frac{\epsilon}{\sqrt{n}}\|\bm{c}\|  \notag \\
& \leq \sup_{g \in \cG_\phi} \inf_{\|h\|_{\cH_k} \leq \sqrt{m}}  \left\|g - h \right\|_{\tau_{d-1}}     \label{decompose1}      \\
& + \sup_{\|h\|_{\cH_k} \leq \sqrt{m}}\inf_{c_1,\cdots,c_n} \left\|h-\frac{1}{n}\sum_{i=1}^n c_i\sigma_{x_i} \right\|_{\tau_{d-1}}   +\frac{\epsilon}{\sqrt{n}}\|\bm{c}\|          \label{decompose2}
\end{align}
Firstly we consider the term  \eqref{decompose1}. By Lemma \ref{app-kernel}, for any $x \in \SS^{d-1}$, there exists a function $h_x \in \cH_k$ such that $\|h_x\|_{\cH_k} \leq \sqrt{m}$ and $\|\sigma_x - h_x\|_{\tau_{d-1}} \leq \sqrt{q(d,L)L(m)} $. For any $g \in \cG_\phi$, there exists a signal measure $\mu \in \cM(\cX),\,\|\mu\|_{\mathrm{TV}} \leq 1$ such that $ g= \int_\cX \sigma_x \d \mu(x)$. Let $h = \int_\cX h_x \d \mu(x) $, by Jensen's inequality we know that $h$ satisfies
$\|h\|_{\cH_k} \leq \int_{\SS^{d-1}} \|h_x\|_{\cH_k} \d \mu(x) \leq \sqrt{m}$ and
 \begin{align} \label{bound1}
          \left\|\int_{\SS^{d-1}}\sigma_x \d \mu(x)  - h \right\|_{\tau_{d-1}}^2 \leq \int_{\SS^{d-1}} \|\sigma_x - h_x \|_{\tau_{d-1}}^2 \d \mu(x) \leq q(d,L)L(m).
 \end{align} 
Thus \eqref{decompose1} is bounded by $\sqrt{q(d,L)L(m)} $. Next, we apply Lemma \ref{RFM-app-kernel} to bound the second term \eqref{decompose2}
\begin{align} \label{bound2}
 \sup_{\|h\|_{\cH_k} \leq \sqrt{m}} \inf_{c_1,\cdots,c_n} \left\|h - \frac{1}{n}\sum_{i=1}^n c_i\sigma_{x_i} \right\|_{\tau_{d-1}} + \frac{\epsilon}{\sqrt{n}}\|\bm{c}\| \lesssim \sqrt{m}\left(\epsilon + \sqrt{\frac{\kappa(1)\left(\log n + \log(1/\delta)\right)}{n}} \right).
\end{align}
Combining \eqref{bound1} and \eqref{bound2} and using the arbitrariness of $m$, we complete the proof.
\qed
\subsection{The Lower Bound}
Our proof needs the following two lemmas related to the linear approximation of RKHS and parametric feature functions
\label{sec: proof-rkhs-upper-bound}

\begin{lemma} \label{RFM-app-neuron}\cite[Proposition 1]{Wu2021ASA}
For any basis function $\varphi_1,\cdots,\varphi_n \in L^2(\pi)$, we have
\begin{align*}
\sup_{g \in \cG_\phi}\inf_{c_1,\cdots,c_n} \left\|g - \sum_{j=1}^n c_j \varphi_j \right\|_\pi  \geq \Lambda_{k,\pi}(n).
\end{align*}
\end{lemma}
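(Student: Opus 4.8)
This is a worst-case linear ($n$-width type) lower bound, and the plan is to test the approximating subspace against the individual feature maps $\phi(x,\cdot)$ and then average over $x\sim\pi$. Fix basis functions $\varphi_1,\dots,\varphi_n\in L^2(\pi)$, set $V_n=\mathrm{span}\{\varphi_1,\dots,\varphi_n\}$, and let $u_1,\dots,u_{n'}$ (with $n'\le n$) be an orthonormal basis of $V_n$ in $L^2(\pi)$. Since $\inf_{c_1,\dots,c_n}\|g-\sum_j c_j\varphi_j\|_\pi$ is precisely the distance from $g$ to $V_n$ in $L^2(\pi)$, and since every $\phi(x,\cdot)$ lies in $\cG_\phi$, we have
\[
\sup_{g\in\cG_\phi}\inf_{c_1,\dots,c_n}\Big\|g-\sum_{j=1}^n c_j\varphi_j\Big\|_\pi\ \ge\ \sup_{x\in\cX}\mathrm{dist}_{L^2(\pi)}\big(\phi(x,\cdot),V_n\big),
\]
and, by Pythagoras together with $\|\phi(x,\cdot)\|_\pi^2=k(x,x)$,
\[
\mathrm{dist}_{L^2(\pi)}\big(\phi(x,\cdot),V_n\big)^2=k(x,x)-\sum_{j=1}^{n'}\langle u_j,\phi(x,\cdot)\rangle_\pi^2 .
\]

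The key step is to bound the right-hand side below by its average over $x\sim\pi$, rather than optimizing a single test direction. Here $\int_\cX k(x,x)\,\d\pi(x)=\sum_{i\ge1}\mu_i^{k,\pi}$, and, after expanding $\langle u_j,\phi(x,\cdot)\rangle_\pi=\int u_j(v)\phi(x,v)\,\d\pi(v)$, applying Fubini (legitimate since $k$, hence the symmetric feature function $\phi$ of Proposition~\ref{integral}, is bounded on the compact domain) and using $\int_\cX\phi(x,v)\phi(x,v')\,\d\pi(x)=k(v,v')$, one finds $\int_\cX\langle u_j,\phi(x,\cdot)\rangle_\pi^2\,\d\pi(x)=\langle u_j,\cT_k^\pi u_j\rangle_\pi$. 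Hence
\[
\sup_{x\in\cX}\mathrm{dist}_{L^2(\pi)}\big(\phi(x,\cdot),V_n\big)^2\ \ge\ \sum_{i\ge1}\mu_i^{k,\pi}-\sum_{j=1}^{n'}\langle u_j,\cT_k^\pi u_j\rangle_\pi .
\]
By Ky Fan's maximum principle, $\sum_{j=1}^{n'}\langle u_j,\cT_k^\pi u_j\rangle_\pi\le\sum_{i=1}^{n'}\mu_i^{k,\pi}\le\sum_{i=1}^{n}\mu_i^{k,\pi}$, so the last display is at least $\sum_{i>n}\mu_i^{k,\pi}=\Lambda_{k,\pi}(n)^2$; taking square roots gives the claim.

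I expect the only real subtlety to be the averaging/Ky Fan combination. The natural first instinct is to dualize $\mathrm{dist}(\cdot,V_n)$ and optimize over $a\in V_n^\perp$ with $\|a\|_\pi\le1$, but $\sup_a\langle a,\cT_k^\pi a\rangle_\pi$ over the codimension-$n$ subspace $V_n^\perp$ is only guaranteed to be $\mu_{n+1}^{k,\pi}$, which is far weaker than $\sum_{i>n}\mu_i^{k,\pi}$; averaging the squared residual over the whole input distribution is what upgrades the bound to the full spectral tail. A second, cosmetic point is the Fubini swap and the identity $\int_\cX\phi(x,v)\phi(x,v')\,\d\pi(x)=k(v,v')$: if one prefers not to invoke symmetry of $\phi$, the same conclusion follows by writing $A\colon L^2(\pi)\to L^2(\pi)$, $(Af)(x)=\int\phi(x,v)f(v)\,\d\pi(v)$, observing $\int_\cX\langle u_j,\phi(x,\cdot)\rangle_\pi^2\,\d\pi(x)=\langle u_j,A^*A u_j\rangle_\pi$, and using that $A^*A$ and $\cT_k^\pi=AA^*$ share the same nonzero eigenvalues, hence the same tail sums.
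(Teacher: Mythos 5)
Your proof is correct: testing the subspace against the single features $\phi(x,\cdot)$, averaging the squared residual in $x\sim\pi$ to produce the trace $\sum_i\mu_i^{k,\pi}$, and bounding the captured energy $\sum_{j}\langle u_j,\cT_k^\pi u_j\rangle_\pi$ by $\sum_{i\le n}\mu_i^{k,\pi}$ via Ky Fan's maximum principle is exactly the canonical argument for this spectral-tail lower bound. The paper does not reproduce a proof (it cites \cite[Prop.~1]{Wu2021ASA}), but your reconstruction --- including the correct observation that dualizing over $V_n^\perp$ only yields $\mu_{n+1}^{k,\pi}$ and that averaging over $\pi$ is what upgrades this to the full tail sum $\Lambda_{k,\pi}(n)^2$ --- is the expected route.
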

\begin{lemma}  \label{appRKHS}
For a kernel $\tilde{k}$ on $\cX$, let $\cH_{\tilde{k}}$ be the corresponding RKHS, denote $\tilde{s} = \int_{\cX}  \tilde{k}(x,x) \d \pi(x)$ the trace of $\tilde{k}$ with respect to $\pi$. 
Recall that $\{e_j^{\tilde{k},\pi} \}$ be the eigenfunctions of the kernel $\tilde{k}$ with respect to $\pi$. We have
\begin{align*}
 \sup_{\|h\|_{\cH_{\tilde{k}}}\leq 1}\inf_{c_1,\cdots,c_n} \left\| h -\sum_{j=1}^n c_j e_j^{\tilde{k},\pi} \right\|_\pi  \leq \sqrt{\frac{\tilde{s}}{n}}
\end{align*}
\end{lemma}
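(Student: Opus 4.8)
The plan is to invoke the Mercer decomposition of $\tilde{k}$ with respect to $\pi$ and observe that the truncated eigenbasis furnishes the optimal finite-rank $L^2(\pi)$ approximation. Write $\tilde{k}(x,x') = \sum_{j\ge 1}\tilde{\mu}_j e_j(x)e_j(x')$ with eigenvalues $\tilde{\mu}_1\ge\tilde{\mu}_2\ge\cdots\ge 0$ and $\{e_j\}:=\{e_j^{\tilde{k},\pi}\}$ orthonormal in $L^2(\pi)$; the trace identity recalled in the Preliminaries gives $\sum_{j\ge 1}\tilde{\mu}_j = \int_{\cX} \tilde{k}(x,x) \d \pi(x) = \tilde{s}$.

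First I would fix $h\in\cH_{\tilde{k}}$ with $\|h\|_{\cH_{\tilde{k}}}\le 1$ and expand $h=\sum_{j\ge 1}a_j e_j$, so that the RKHS-norm formula from the Preliminaries reads $\sum_{j\ge 1}a_j^2/\tilde{\mu}_j=\|h\|_{\cH_{\tilde{k}}}^2\le 1$. Choosing $c_j=a_j$ for $j=1,\dots,n$ and using orthonormality of $\{e_j\}$ in $L^2(\pi)$, we obtain $\inf_{c_1,\dots,c_n}\|h-\sum_{j=1}^n c_j e_j\|_\pi^2\le\|h-\sum_{j=1}^n a_j e_j\|_\pi^2=\sum_{j>n}a_j^2$.

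Next I would bound the tail $\sum_{j>n}a_j^2$. Since the eigenvalues are nonincreasing, for every $j>n$ we have $a_j^2=(a_j^2/\tilde{\mu}_j)\,\tilde{\mu}_j\le\tilde{\mu}_{n+1}\,(a_j^2/\tilde{\mu}_j)$, hence $\sum_{j>n}a_j^2\le\tilde{\mu}_{n+1}\sum_{j>n}a_j^2/\tilde{\mu}_j\le\tilde{\mu}_{n+1}$. Monotonicity once more gives $n\,\tilde{\mu}_{n+1}\le\sum_{j=1}^n\tilde{\mu}_j\le\tilde{s}$, so $\tilde{\mu}_{n+1}\le\tilde{s}/n$. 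Combining these bounds and taking the supremum over $\|h\|_{\cH_{\tilde{k}}}\le 1$ yields $\sup_{\|h\|_{\cH_{\tilde{k}}}\le 1}\inf_{c_1,\dots,c_n}\|h-\sum_{j=1}^n c_j e_j\|_\pi\le\sqrt{\tilde{s}/n}$.

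There is no substantive obstacle here: this is the textbook spectral-truncation estimate. The only points deserving a line of care are the validity of the Mercer expansion (which requires $\tilde{k}$ continuous on a compact domain, as already assumed in the Preliminaries) and the fact that replacing the infimum over $(c_j)$ by the particular choice $c_j=a_j$ only enlarges the quantity being bounded, which is immediate; if some $\tilde{\mu}_{n+1}=0$ the tail $\sum_{j>n}a_j^2$ vanishes for every $h\in\cH_{\tilde{k}}$ and the bound holds trivially.
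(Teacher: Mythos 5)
Your proof is correct and follows essentially the same spectral-truncation argument as the paper: project $h$ onto the first $n$ eigenfunctions, bound the tail $\sum_{j>n}a_j^2$ by the $(n+1)$-st eigenvalue times the RKHS norm, and use $n\tilde{\mu}_{n+1}\le\sum_{j\le n}\tilde{\mu}_j\le\tilde{s}$. The only cosmetic difference is that the paper bounds the tail by $\tilde{\mu}_n$ rather than $\tilde{\mu}_{n+1}$; both yield the stated $\sqrt{\tilde{s}/n}$.
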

\begin{proof}
   Since $\{e_j^{\tilde{k},\pi}\}$ are orthonormal in $L^2(\pi)$, for any $h \in \cH_{\tilde{k}}$ with $\|h\|_{\cH_{\tilde{k}}} \leq 1$, the optimal approximation is given by $c_j = \langle h,e_j^{\tilde{k},\pi} \rangle_\pi$. Thus
   \begin{align}\label{4567}
       \inf_{c_1,\cdots,c_n}\left\|h - \sum_{j=1}^n c_je_j^{\tilde{k},\pi}\right\|_\pi^2 & = \left\|h - \sum_{j=1}^n \langle h,e_j^{\tilde{k},\pi}  \rangle_\pi  e_j^{\tilde{k},\pi} \right\|_\pi^2 \notag  \\
       &  = \sum_{j=n+1}^\infty \langle h, e_j^{\tilde{k},\pi}\rangle_\pi^2. 
   \end{align}
   By the definition of RKHS norm, we have 
   \begin{equation}\label{eqn: 015}
 \|h\|_{\cH_{\tilde{k}}}^2=\sum_{j=1}^\infty \frac{1}{\mu_j^{\tilde{k},\pi}} \langle h,e_j^{\tilde{k},\pi} \rangle_\pi^2  \leq 1.
   \end{equation}
   The right hand side of \eqref{4567} is bounded by
  \begin{align*}
\sum_{j=n+1}^{\infty}\left\langle h, e_j^{\tilde{k},\pi}\right\rangle_\pi^2 \leq \mu_n^{\tilde{k},\pi} \sum_{j=n+1}^{\infty} \frac{1}{\mu_j^{\tilde{k},\pi}}\left\langle h, e_j^{\tilde{k},\pi}\right\rangle_\rho^2 \leq \mu_n^{\tilde{k},\pi} \sum_{j=1}^{\infty} \frac{1}{\mu_j^{\tilde{k},\pi}}\left\langle h, e_j^{\tilde{k},\pi}\right\rangle_\pi^2 \leq  \mu_n^{\tilde{k},\pi},
  \end{align*}
  where the last step is due to \eqref{eqn: 015}. Thus, we complete the proof by noticing that $\mu_n^{\tilde{k},\pi} \leq \frac{\tilde{s}}{n} $, which is due to 
  \[
  \ts =\sum_{j=1}^\infty \mu_j^{\tilde{k},\pi}\geq \sum_{j=1}^n \mu_j^{\tilde{k},\pi}\geq n \mu_n^{\tilde{k},\pi}.
  \]
\end{proof}

\paragraph*{Proof of the second part(lower bound) of Proposition \ref{approximation-Barron}.} 
 By the triangle inequality, we decompose \eqref{kernel-app-lower} as
    \begin{align}
&    \sup_{g \in \cG_\phi}\inf_{h \in \cH_{\tilde{k}}} \left\|g- h \right\|_\pi + \epsilon \|h\|_{\cH_{\tilde{k}}}  \notag \\
& \geq \sup_{g \in \cG_\phi}\inf_{\bm{c} \in \RR^n} \left\|g - \sum_{j=1}^n c_j e_j^{\tilde{k},\pi} \right\|_\pi  \label{decompose3} \\ 
& 
-\left( \sup_{h \in \cH_{\tilde{k}}}\inf_{\bm{c} \in \RR^n} \left\| h-\sum_{j=1}^n c_j e_j^{\tilde{k},\pi} \right\|_\pi     - \epsilon \|h\|_{\cH_{\tilde{k}}} \right)    \label{decompose4}
    \end{align}
Applying Lemma \ref{RFM-app-neuron}, the first term \eqref{decompose3} can be bounded by
\begin{align*}
     \sup_{h \in \cG_\phi}\inf_{\bm{c} \in \RR^n} \left\|h - \sum_{j=1}^n c_j e_j^{\tilde{k},\pi} \right\|_\pi \geq \Lambda_{{k},\pi}(n).
\end{align*}
To tackle the second term \eqref{decompose4}, we use Lemma \ref{appRKHS} to obtain
\begin{align*}
     \sup_{h \in \cH_{\tilde{k}}}\inf_{c_1,\cdots,c_n} \left\| h -\sum_{j=1}^n c_j e_j^{\tilde{k},\pi} \right\|_\pi     - \epsilon \|h\|_{\cH_{\tilde{k}}}  \leq 0,\,\forall \epsilon \geq \sqrt{\frac{\tilde{s}}{n}},
\end{align*}
where $\tilde{s} = \int_{\cX}\tilde{k}(v,v)\d \pi(v) $ be the trace of $\tilde{k}$. Combining the bound for \eqref{decompose3} and \eqref{decompose4}, we obtain
$$
  \sup_{g \in \cG_\phi}\inf_{h \in \cH_{\tilde{k}}} \left\|g- h \right\|_\pi + \epsilon \|h\|_{\cH_{\tilde{k}}} \geq \Lambda_{k,\pi}(n),\quad \forall \epsilon \geq \sqrt{\frac{\tilde{s}}{n}}.
$$
Therefore, for any $\epsilon > 0$, 
\begin{align*}
 \sup_{g \in \cG_\phi}\inf_{h \in \cH_{\tilde{k}}} \left\|g- h \right\|_\pi + \epsilon \|h\|_{\cH_{\tilde{k}}}  & \geq \min\left( 1, \epsilon \sqrt{\frac{n}{\tilde{s}}}\right)\left( \sup_{g \in \cG_\phi}\inf_{h \in \cH_{\tilde{k}}} \left\|g- h \right\|_\pi + \sqrt{\frac{\tilde{s}}{n}} \|h\|_{\cH_{\tilde{k}}}  \right) \\
 & \geq \min\left( 1, \epsilon \sqrt{\frac{n}{\tilde{s}}}\right)\Lambda_{k,\pi}(n)
\end{align*}
Finally, we complete the proof by noticing that 
\begin{align*}
    s =\int_\cX k(x,x) \d \rho(x) =\int_{\cX} \int_{\cX}\phi(x,v)\phi(x,v) \d \pi(v) \d \rho(x) = \tilde{s}.
\end{align*}
\qed
\section{Proof of Proposition \ref{spectral-gaussian}} \label{appendixD}
We first choose a fixed $j \in \mathbb{N}^+$ such that $\beta j > \alpha -1$. By Theorem 2 in \cite{minh2006mercer} and Stirling's formula, we know that
\begin{equation*}
    \lambda_j \sim (\frac{2e}{h^2})^j \frac{d^{d/2+1/2}}{(2j+d-2)^{j+d/2-1/2}} \sim d^{1-j}.
\end{equation*}
Then, again using Stirling's formula, we have
\begin{align*}
    N(d,j)\lambda_j &= \frac{2j+d-2}{j}\binom{j+d-3}{d-2}\lambda_j\\
    &\sim \frac{j+d}{j}\frac{(j+d)^{j+d-5/2}}{d^{d-3/2}j^{j-3/2}} d^{1-j}\\
    &\sim d.
\end{align*}
Let 
\begin{align*}
    m_j &= \sum_{i=0}^{j}N(d,i)= \sum_{i=0}^j[\frac{\Gamma(i+d)}{\Gamma(d)\Gamma(i+1)} - \frac{\Gamma(i+d-2)}{\Gamma(d)\Gamma(i-1)}\\
    &= \frac{\Gamma(j+d)}{\Gamma(d)\Gamma(j+1)} + \frac{\Gamma(j+d-1)}{\Gamma(d)\Gamma(j)}\sim d^j.
\end{align*}
If inequality \eqref{cod_gaussian} holds, then
\begin{equation*}
    1 \gtrsim d^{-\alpha}m_j^\beta \Lambda_{k,\tau_{d-1}}(m_j) \gtrsim d^{\beta j - \alpha} N(d,j+1)\lambda_{j+1} \sim d^{\beta j - \alpha + 1},
\end{equation*}
which is a contradiction.
\qed


\end{document}